\documentclass{article}

 \usepackage[preprint]{neurips_2026}
 \usepackage[utf8]{inputenc} 
\usepackage[T1]{fontenc}    
\usepackage{hyperref}       
\usepackage{url}            
\usepackage{amsfonts}       
\usepackage{nicefrac}       

\usepackage{amsmath}
\usepackage{amssymb}
\usepackage{amsthm}    
\usepackage{dsfont}
\usepackage{booktabs}
\usepackage{dsfont}
\usepackage{graphicx}
\usepackage{makecell}
\usepackage{multirow}
\usepackage{pifont}
\usepackage[group-separator={,}]{siunitx}
\usepackage{tabularx}

\usepackage{xcolor}
\usepackage{caption}
\usepackage{soul}
\usepackage{changepage,threeparttable}
\usepackage{array}
\usepackage{colortbl}
\usepackage{algpseudocode}
\usepackage{algorithm}
\usepackage{mathtools}
\usepackage{xfrac}

\usepackage{catchfile}
\usepackage{longtable}
\usepackage{geometry}
\usepackage{enumitem}
\usepackage{tabu}
\usepackage{supertabular}
\usepackage{multicol}
\usepackage{algpseudocode}
\usepackage{algorithm}
\usepackage{subfig}
\usepackage{wrapfig}
\usepackage{subcaption}

\newcolumntype{Y}{>{\centering\arraybackslash}X}

\definecolor{color_1}{RGB}{255,0,128}
\definecolor{color_2}{RGB}{0,128,128}
\definecolor{color_3}{RGB}{0,128,0}
\definecolor{color_4}{RGB}{128,0,0}
\definecolor{color_5}{RGB}{128,0,128}
\definecolor{cadetgrey}{RGB}{0.57, 0.64, 0.69}
\definecolor{our_color}{RGB}{0,65,145}

\newcommand{\KL}{\mathrm{KL}}
\DeclareMathOperator*{\argmax}{arg\,max}

\newtheorem{proposition}{Proposition}

\newtheorem{lemma}{Lemma}
\newtheorem{remark}{Remark}

\usepackage{amssymb}
\usepackage{pifont}
\newcommand{\cmark}{\ding{51}}%
\newcommand{\xmark}{\ding{55}}%

\newcommand{\OC}[1]{\textcolor{our_color}{#1}}

\title{Drifting Field Policy: A One-Step Generative Policy via Wasserstein Gradient Flow}

\author{
Juil Koo $\quad$
Mingue Park $\quad$
Jiwon Choi $\quad$
Yunhong Min $\quad$
Minhyuk Sung \\[0.2em]
KAIST \\
{\tt\small \texttt{\{63days, kicikicik\}@kaist.ac.kr, jwchoi1529@gmail.com,}}\\
{\tt\small \texttt{\{dbsghd363, mhsung\}@kaist.ac.kr}}
}

\begin{document}
\maketitle

\begin{abstract}
We propose \textbf{Drifting Field Policy (DFP)}, a non-ODE one-step generative policy built on the \emph{drifting model} paradigm~\cite{Deng2026:Drifting}. We frame the policy update as a reverse-KL Wasserstein-2 gradient flow toward a soft target policy, so that each DFP update corresponds to a gradient step in probability space. By construction, this gradient is decomposed into an ascent toward higher action-value regions and a score matching with the anchor policy as a trust region. We further derive a simple, tractable surrogate of the otherwise intractable update loss, akin to behavior cloning on top-$K$ critic-selected actions. We find empirically that this mechanism uniquely benefits the drifting backbone owing to its non-ODE parameterization. With one-step inference, DFP achieves state-of-the-art performance on several manipulation tasks across Robomimic and OGBench, outperforming ODE-based policies.
\end{abstract}
\section{Introduction}
\label{sec:intro}
Offline-to-online reinforcement learning (RL) has emerged as a practical paradigm for continuous control~\cite{Nair2020:AWAC, kostrikov2022:IQL, nakamoto2023:CalQL, ball2023:RLPD}, where an RL agent is first pretrained on static demonstrations and then refined through online interaction. To faithfully capture the multimodal action distributions of real-world demonstrations beyond unimodal Gaussians, the field has increasingly turned to generative policies, with ODE-based backbones such as diffusion and flow policies~\cite{wang2022:DiffusionQL, hansen2023:IDQL, ding2024:QVPO, Park2025:FQL, Ding2024:consistencypolicy} proving particularly effective. To further meet the low-latency demands of deployment, recent work has converged on one-step inference, realized by one-step variants of these ODE backbones~\cite{frans2024:shortcut, geng2025:mean, Song:2023CM, Kim:2023CTM} that amortize ODE trajectory integration into a single forward pass. 


Despite the success of generative policies in offline behavior cloning~\cite{Black2025:Pi0, Janner2022:Diffuser, Prasad2024:ConsistencyPolicy, Chi2025:DiffusionPolicy}, RL finetuning exposes a structural burden for ODE-based parameterizations: a reward signal defined at the action must propagate back through the entire ODE trajectory, posing a non-trivial output-to-trajectory credit assignment problem~\cite{Ren2025:DPPO, Li2026:QAM, ding2024:QVPO, Mcallister2026:FPO, Black2024:DDPO, Kim2025:RBF}. Crucially, this burden persists even in one-step variants, whose training objective is still defined along the ODE path. This motivates us to seek a one-step policy parameterization that bypasses trajectory integration entirely, so that output-level reward signals can act directly at the action level.

We propose \textbf{Drifting Field Policy (DFP)}, a non-ODE one-step generative policy built on \emph{drifting models}~\cite{Deng2026:Drifting}. The policy is a single-pass pushforward map $\pi_\theta(\cdot|s) = [f_\theta(\cdot, s)]_\# p_\epsilon$ from a prior $p_\epsilon$ to the action space, with no time variable. Drifting models are trained via a \emph{drifting field} update that combines attraction toward a target distribution with repulsion from the current model distribution, driving $\pi_\theta$ toward the target. In RL fine-tuning, with the soft policy improvement target $\pi^+ \propto \pi_{\mathrm{old}}\exp(Q/\alpha)$~\cite{Levine2018:RLasInference, Haarnoja2018:SAC}, we frame the corresponding drifting field update as a reverse-KL Wasserstein-2 gradient flow~\cite{Cao2026:GFD} that minimizes $\KL(\pi_\theta \| \pi^+)$: the ideal drift field follows the steepest-descent direction toward $\pi^+$ on probability space. We further show that this gradient is structurally decomposed into a $\nabla_a Q$ ascent direction and a score matching with the anchor policy $\pi_{\mathrm{old}}$ as a trust region.

However, $\pi^+$ in this KL is intractable due to the normalizing constant. We therefore propose a simple yet effective tractable surrogate that replaces $\pi^+$ with the top-$K$ critic-selected actions as positive targets. The training objective under this approximation is akin to behavior cloning on these $K$ self-generated candidates, thus easy to implement with no architectural change. We prove this surrogate has bounded approximation error to the ideal update. Since DFP performs gradient descent directly on \emph{probability space}, each step shifts the policy distribution directly at the action output, in contrast to diffusion and flow policies~\cite{wang2022:DiffusionQL, hansen2023:IDQL, ding2024:QVPO,Park2025:FQL, Zhan2026:MVP} that update the velocity prediction defining their ODE, spreading each signal across the ODE trajectory. Ablations confirm this distinction: the same top-$K$ supervision yields only marginal gains on a MeanFlow backbone~\cite{geng2025:mean}.



We summarize our contributions as follows:
(i) We introduce the first application of drifting models to RL fine-tuning, framing the ideal drifting-field update as a Wasserstein-2 gradient flow direction on probability space — the steepest-descent direction toward the soft policy improvement target $\pi^+$ (Sec.~\ref{sec:method}).
(ii) We derive a tractable top-$K$ surrogate of the otherwise intractable target $\pi^+$ with a bounded approximation error to the ideal policy improvement loss (Proposition~\ref{prop:topk_limit}), and show that this top-$K$ supervision is particularly effective on the drifting backbone, in contrast to ODE-based one-step backbones whose velocity-level updates can spread each positive's signal across the ODE trajectory (Sec.~\ref{sec:why_drifting}).
(iii) On 12 tasks across Robomimic~\cite{robomimic2021} and OGBench~\cite{park2024:OGbench} benchmarks, DFP achieves state-of-the-art performance on 9 of 12 tasks and second-best on the remaining 3, outperforming prior ODE-based generative policies, such as QC-FQL~\cite{Li2025:QC} and MVP~\cite{Zhan2026:MVP}, by a large margin on average (Sec.~\ref{sec:results}).
    
\section{Preliminaries}
\label{sec:preliminaries}

In this section, we briefly review the three building blocks of our method: the offline-to-online RL setting and the off-policy actor-critic framework (Sec.~\ref{sec:offline-to-online-rl}), drifting models as a one-step generative paradigm (Sec.~\ref{sec:prelim_drifting}), and the Wasserstein gradient flow interpretation of the drifting field (Sec.~\ref{sec:prelim_wgf}).

\subsection{Offline-to-Online Reinforcement Learning}
\label{sec:offline-to-online-rl}
We consider a Markov Decision Process $\mathcal{M} = (\mathcal{S}, \mathcal{A}, P, r, \gamma)$ with state space $\mathcal{S}$, action space $\mathcal{A} \subseteq \mathbb{R}^d$, transition dynamics $P(s' | s, a)$, reward $r(s, a) \in \mathbb{R}$, and discount factor $\gamma \in [0, 1)$. The objective of reinforcement learning is to find a policy $\pi(\cdot | s)$ maximizing the expected discounted return $J(\pi) = \mathbb{E}_\pi\bigl[\sum_{k=0}^\infty \gamma^k r(s_k, a_k)\bigr]$.

A common training strategy in offline-to-online RL is to first pretrain a policy on a static offline dataset $\mathcal{D}_{\mathrm{offline}}$ (offline stage) and then fine-tune it with a limited budget of online interactions (online stage). 
We denote by $\mathcal{D}$ the replay buffer that accumulates both $\mathcal{D}_{\mathrm{offline}}$ and the online transitions, and by $\pi_\beta$ the (potentially unknown) behavior distribution that generates the data in $\mathcal{D}$.

We adopt the standard off-policy actor-critic framework~\cite{Konda1999:AC}, in which the critic $Q_\phi$ estimates the expected discounted return for each state-action pair via a temporal difference loss, and the actor $\pi_\theta$ is updated to maximize $Q_\phi$ under a behavioral constraint~\cite{Wu2019:BRAC, FujimotoandGu2021:TD3BC, tarasov2023:ReBRAC} that keeps it close to $\pi_\beta$. The two are jointly trained by minimizing:
\begin{align}
\label{eq:critic_td}
\mathcal{L}_Q(\phi) &= \mathbb{E}_{(s, a, r, s') \sim \mathcal{D}}\bigl[(Q_\phi(s, a) - r - \gamma\, Q_{\bar\phi}(s', a'))^2\bigr], \quad a' \sim \pi_\theta(\cdot | s'), \\
\label{eq:actor_loss}
\mathcal{L}_\pi(\theta) &= -\mathbb{E}_{s \sim \mathcal{D},\, a \sim \pi_\theta(\cdot|s)}[Q_\phi(s, a)], \quad \text{s.t.}\ D(\pi_\theta(\cdot|s),\, \pi_\beta(\cdot|s)) \leq \varepsilon,
\end{align}
where $Q_{\bar\phi}$ is a target network~\cite{Lillicrap2016:DDPG, Mnih2015:DQN} and $D$ is a divergence between $\pi_\theta$ and $\pi_\beta$. 
In Sec.~\ref{sec:method}, we instantiate this constrained actor objective with a novel one-step generative policy built on drifting models.

\subsection{Drifting Models}
\label{sec:prelim_drifting}

Drifting models~\cite{Deng2026:Drifting} are a recent paradigm for one-step generative modeling. Let $p := p_{\mathrm{data}}$ denote the data distribution on $\mathbb{R}^d$ and $p_\epsilon := \mathcal{N}(0, I)$ a prior on $\mathbb{R}^m$. Instead of describing transport from $p_\epsilon$ to $p$ through a stochastic process or its ODE counterpart at inference as in diffusion and flow models~\cite{Ho:2020DDPM, Song:2021DDIM, Song:2021ScoreSDE, Lipman:2023FM}, drifting models shift the dynamics to training: they directly parameterize single-pass pushforward map $f_\theta: \mathbb{R}^m \to \mathbb{R}^d$ trained so that the model distribution $q := [f_\theta]_\# p_\epsilon$ matches $p$.

The core of drifting models is the \emph{drifting field} $\mathbf{V}_{p, q} = \mathbf{V}^+_p - \mathbf{V}^-_q$, a vector field that comprises an attraction term $\mathbf{V}^+_p$ and a repulsion term $\mathbf{V}^-_q$, both constructed via kernel mean shift:
\begin{align}
\mathbf{V}^+_p(x) = \frac{\mathbb{E}_{y^+ \sim p}\bigl[k(x, y^+)(y^+ - x)\bigr]}{\mathbb{E}_{y^+ \sim p}[k(x, y^+)]}, \qquad
\mathbf{V}^-_q(x) = \frac{\mathbb{E}_{y^- \sim q}\bigl[k(x, y^-)(y^- - x)\bigr]}{\mathbb{E}_{y^- \sim q}[k(x, y^-)]},
\label{eq:attraction_repulsion}
\end{align}
where $k: \mathbb{R}^d \times \mathbb{R}^d \to \mathbb{R}_{>0}$ is a similarity kernel with bandwidth $h$ (e.g., the Gaussian kernel $k(x, y) = \exp(-\|x - y\|^2 / (2h^2))$), and $y^+ \sim p$ and $y^- \sim q$ denote positive and negative samples drawn from the two distributions. Intuitively, the attraction term pulls generated samples toward nearby data, while the repulsion term pushes them away from one another to prevent mode collapse.

By construction, the drifting field is anti-symmetric, $\mathbf{V}_{p,q}(x) = -\mathbf{V}_{q,p}(x)$, which implies $q=p \Rightarrow \mathbf{V}_{p,q} \equiv 0$. Under mild kernel regularity conditions, the converse also holds in the sense that $\mathbf{V}_{p,q} \approx 0 \Rightarrow q \approx p$~\cite{Deng2026:Drifting}. Drifting models are trained to satisfy $\mathbf{V}_{p, q} = 0$ via fixed-point regression with the stop-gradient operator ``$\mathrm{sg}$'' on the drifted target:
\begin{equation}
\mathcal{L}_{\mathrm{drift}}(\theta;\, p, q) \;=\; \mathbb{E}_{\epsilon \sim p_\epsilon}\bigl[\|x - \mathrm{sg}(x + \mathbf{V}_{p, q}(x))\|^2 \bigr], \text{ where } x = f_\theta(\epsilon).
\label{eq:Ldrift_general}
\end{equation}
The drift loss is parameterized by the positive (target) distribution $p$ and the negative (source) distribution $q$, and the same form applies to any choice of $(p, q)$ beyond unconditional generative modeling. In Sec.~\ref{sec:method}, we exploit this flexibility by plugging in different positive distributions for policy improvement and behavior cloning.



\subsection{Drifting Field as a Wasserstein Gradient Flow}
\label{sec:prelim_wgf}

A recent line of work~\cite{Cao2026:GFD, He2026:Sinkhorn} reveals that the drifting field $\mathbf{V}_{p, q}$ of Sec.~\ref{sec:prelim_drifting} is precisely the particle velocity of a Wasserstein-2 gradient flow (WGF) under KDE-smoothed densities. We recall this identification, which underpins the policy-space treatment in Sec.~\ref{sec:method}.


\paragraph{Wasserstein-2 gradient flow.} 
Equip the space $\mathcal{P}_2(\mathbb{R}^d)$ of probability measures with finite second moment with the Wasserstein-2 distance $W_2$~\cite{Ambrosio2005:GF, Santambrogio2015:optimal}. An absolutely continuous curve $\{q_t\}_{t \geq 0} \subset \mathcal{P}_2(\mathbb{R}^d)$ is characterized by a time-dependent velocity field $v_t: \mathbb{R}^d \to \mathbb{R}^d$ that transports each particle along $\dot{x}_t = v_t(x_t)$ and induces the marginal evolution $\partial_t q_t + \nabla \cdot (q_t v_t) = 0$ through the continuity equation. For a smooth functional $\mathcal{F}: \mathcal{P}_2(\mathbb{R}^d) \to \mathbb{R}$ with first variation $\frac{\delta \mathcal{F}}{\delta q}$, the $W_2$ gradient flow~\cite{Jordan1998:Variational} of $\mathcal{F}$ is the absolutely continuous curve whose velocity field is the steepest-descent direction $v_t(x) = -\nabla_x \frac{\delta \mathcal{F}}{\delta q_t}(x)$, equivalently characterized by the PDE
\begin{equation}
    \partial_t q_t = \nabla \cdot \!\left( q_t \nabla_x \frac{\delta \mathcal{F}}{\delta q_t} \right),
    \label{eq:wgf_pde}
\end{equation}
the $W_2$-gradient flow interpretation of ``$\dot{q}_t = -\nabla \mathcal{F}(q_t)$'' on $\mathcal{P}_2$. Specializing to $\mathcal{F}(q) = \KL(q \| p)$, the velocity reduces to the score difference (See Appendix~\ref{app:wgf_derivations} for derivation)
\begin{equation}
    v_t(x) = \nabla_x \log p(x) - \nabla_x \log q_t(x),
    \label{eq:wgf_velocity_general}
\end{equation}
which attracts particles toward $p$, repels them from the current $q_t$, and monotonically dissipates $\KL(q_t \| p)$ so that $q_t \to p$ in $W_2$ as $t \to \infty$~\cite{Ambrosio2005:GF}.

\paragraph{Drifting field as KDE-approximated WGF velocity.}
The scores in Eq.~\eqref{eq:wgf_velocity_general} are not available in closed form, since $p$ and $q_t$ are typically only accessible through samples. Replacing each density with its KDE smoothing $\mu_{\mathrm{kde}}(x) := \int k_h(x, y)\, d\mu(y)$ under a Gaussian kernel $k_h(x, y) = \exp(-\|x - y\|^2 / (2h^2))$ yields the score identity~\cite{Cheng1995:Mean}
\begin{equation}
h^2\, \nabla_x \log \mu_{\mathrm{kde}}(x) = \frac{\int k_h(x, y)\,(y - x)\,d\mu(y)}{\int k_h(x, y)\,d\mu(y)}.
\label{eq:kde_grad}
\end{equation}
Substituting Eq.~\eqref{eq:kde_grad} into Eq.~\eqref{eq:wgf_velocity_general}, one can obtain the following identity:
\begin{equation}
h^2\,[\nabla_x \log p_{\mathrm{kde}}(x) - \nabla_x \log q_{\mathrm{kde}}(x)] = \mathbf{V}^+_p(x) - \mathbf{V}^-_q(x) = \mathbf{V}_{p, q}(x).
\label{eq:drift_eq_wgf_general}
\end{equation}
Consequently, the drifting loss of Eq.~\eqref{eq:Ldrift_general} is a parametric KDE-WGF descent of $\KL(q \| p)$, a viewpoint we leverage in Sec.~\ref{sec:method} by specializing $(p, q)$ to the policy learning.


\section{Drifting Field Policy}
\label{sec:method}

We propose \textbf{Drifting Field Policy (DFP)}, a novel one-step generative policy method for RL finetuning based on drifting models~\cite{Deng2026:Drifting}. The policy is a single-pass map $f_\theta: \mathbb{R}^k \times \mathcal{S} \to \mathcal{A}$ that induces a state-conditional pushforward distribution on the action space,
\begin{equation}
\pi_\theta(\cdot | s) \;:=\; [f_\theta(\cdot, s)]_\# p_\epsilon.
\label{eq:pi_theta_def}
\end{equation}
DFP trains $\pi_\theta$ with the drifting training loss $\mathcal{L}_{\mathrm{drift}}(\theta;\, p, q)$ of Sec.~\ref{sec:prelim_drifting}, identifying the model distribution $q$ with $\pi_\theta$ and instantiating the positive distribution $p$ at two complementary targets: the $Q$-maximizing target $\pi^+$ for policy improvement and the behavior distribution $\pi_\beta$ for data anchoring. Sec.~\ref{sec:wgf_theory} derives the training objective with its Wasserstein gradient flow interpretation and tractable top-$K$ surrogate; Sec.~\ref{sec:why_drifting} contrasts with diffusion and flow policies.

\subsection{Policy Improvement via Wasserstein Gradient Flow}
\label{sec:wgf_theory}

Let $\pi_{\mathrm{old}}$ denote a trust-region anchor policy. At each iteration, the goal is to update the current policy $\pi_\theta$ to maximize $Q$ within a trust region around $\pi_{\mathrm{old}}$, given by the standard KL-regularized objective from optimal control~\cite{Kappen2005:linear, Todorov2006:Linearly} and policy search~\cite{Schulman2015:TRPO, Peters2010:REPS, Abdolmaleki2018:Maximum, Levine2014:Learning, Nair2020:AWAC, Peng2019:AWR}:
\begin{align}
\pi^+(\cdot | s) := \argmax_{\pi}\; \mathbb{E}_{a \sim \pi(\cdot | s)}\bigl[Q_\phi(s, a)\bigr] \;-\; \alpha\, D_{\KL}\bigl(\pi(\cdot | s)\, \big\|\, \pi_{\mathrm{old}}(\cdot | s)\bigr),
\label{eq:pi_plus_objective}
\end{align}
with temperature $\alpha > 0$. This optimal policy $\pi^+$ admits the closed-form solution~\cite{Levine2018:RLasInference}:
\begin{equation}
\pi^+(a|s) = \frac{\pi_{\mathrm{old}}(a|s) \exp(Q_\phi(s,a)/\alpha)}{Z(s)}, \quad Z(s) = \int \pi_{\mathrm{old}}(a'|s) \exp(Q_\phi(s,a')/\alpha)\, da'.
\end{equation}
To realize this update under the drifting model parameterization, we instantiate the drifting loss $\mathcal{L}_{\mathrm{drift}}(\theta;\, p, q)$ of Sec.~\ref{sec:prelim_drifting} with $p = \pi^+$ as the positive target and $q = \pi_\theta$ as the negative source. Letting $\hat{a} := f_\theta(\epsilon, s)$,
\begin{equation}
\mathcal{L}_{\mathrm{PI}}(\theta) = \mathcal{L}_{\mathrm{drift}}(\theta;\, \pi^+, \pi_\theta) \;=\; \mathbb{E}_{s, \epsilon}\bigl[\|\hat{a} - \mathrm{sg}(\hat{a} + \mathbf{V}_{\pi^+, \pi_\theta}(\hat{a} | s))\|^2\bigr],
\label{eq:ideal_pi_loss}
\end{equation}
where $\mathbf{V}_{\pi^+, \pi_\theta} = \mathbf{V}^+_{\pi^+} - \mathbf{V}^-_{\pi_\theta}$ is the drifting field between $\pi^+$ and $\pi_\theta$.

\begin{remark}[$\mathcal{L}_{\mathrm{PI}}$ as $\nabla_a Q$ ascent with score matching regularization]
\label{rem:wgf_pi}
Specializing the result of Sec.~\ref{sec:prelim_wgf} to $(p, q) = (\pi^+, \pi_\theta)$, $\mathbf{V}_{\pi^+, \pi_\theta}$ is the KDE-approximated Wasserstein-2 gradient flow velocity of $\KL(\pi_\theta \| \pi^+)$ on policy space~\cite{Cao2026:GFD},
\begin{equation}
\mathbf{V}_{\pi^+, \pi_\theta}(a | s) \;=\; h^2\bigl[\nabla_a \log \pi^+_{\mathrm{kde}}(a | s) - \nabla_a \log \pi_{\theta, \mathrm{kde}}(a | s)\bigr],
\label{eq:drift_pol_wgf}
\end{equation}
so in the ideal nonparametric continuous-time limit, this field gives a
KL-dissipating update direction toward $\pi^+$. Substituting $\nabla_a \log \pi^+ = \tfrac{1}{\alpha}\nabla_a Q_\phi + \nabla_a \log \pi_{\mathrm{old}}$ into Eq.~\eqref{eq:drift_pol_wgf} and the small-bandwidth limit $\log p_{\mathrm{kde}} \to \log p$ yields the structural decomposition
\begin{equation}
\mathbf{V}_{\pi^+, \pi_\theta}(a | s) \;\simeq\; \underbrace{\frac{h^2}{\alpha}\,\nabla_a Q_\phi(s, a)}_{\nabla_a Q\ \text{ascent}} \;+\; \underbrace{h^2\, \bigl(\nabla_a \log \pi_{\mathrm{old}}(a | s) - \nabla_a \log\pi_\theta(a | s)\bigr)}_{\text{trust region around}\ \pi_{\mathrm{old}}\ \text{via score matching}},
\label{eq:wpo_decomp}
\end{equation}
revealing that the ideal drift field contains an action-space $\nabla_a Q$
ascent component at temperature $\alpha$ regularized by score matching between $\pi_\theta$ and the anchor $\pi_{\mathrm{old}}$, requiring neither the critic Jacobian $\nabla_a Q_\phi$ of DDPG-style actor updates~\cite{Lillicrap2016:DDPG} nor an explicit KL computation. Derivation in Appendix~\ref{app:wgf_derivations}.
\end{remark}

\paragraph{Tractable surrogate.}
While Eq.~\eqref{eq:ideal_pi_loss} provides the desired soft policy update~\cite{Levine2018:RLasInference, Haarnoja2018:SAC} without an explicit critic Jacobian or KL computation, it is itself not directly tractable: $\pi^+ \propto \pi_{\mathrm{old}}\exp(Q_\phi/\alpha)$ has an intractable normalization $Z(s)$ and cannot be sampled directly. Thus, we propose a simple yet effective surrogate loss as follows. A natural starting point is self-normalized importance sampling: drawing $N$ candidate actions $a^{(1)}, \ldots, a^{(N)} \overset{\text{i.i.d.}}{\sim} \pi_{\mathrm{old}}(\cdot | s)$ and weighting them by $w_j \propto \exp(Q_\phi(s, a^{(j)})/\alpha)$ to yield an estimator of expectations under $\pi^+$. We observe, however, that an even simpler scheme — replacing the weights with a uniform hard top-$K$ cutoff, so the top-$K$ candidates by $Q_\phi$ are equally weighted — is empirically robust to $K$ over a wide range and admits a bounded approximation error to $\mathcal{L}_{\mathrm{PI}}$ (Proposition~\ref{prop:topk_limit}). 

Denoting the resulting positive set by
\begin{equation}
P_K(s) \;:=\; \bigl\{a^{(j)} \;:\; j \in \mathrm{argTopK}_{j \in [N]}\, Q_\phi(s, a^{(j)})\bigr\}, \qquad a^{(j)} \overset{\text{i.i.d.}}{\sim} \pi_{\mathrm{old}}(\cdot | s),
\label{eq:PK_def}
\end{equation}
the tractable surrogate is
\begin{equation}
\mathcal{L}_{\text{top-}K}(\theta) = \mathcal{L}_{\mathrm{drift}}(\theta;\, P_K, \pi_\theta) \;=\; \mathbb{E}_{s, \epsilon}\bigl[\|\hat{a} - \mathrm{sg}(\hat{a} + \mathbf{V}_{P_K, \pi_\theta}(\hat{a} | s))\|^2\bigr], \quad \hat{a} \sim \pi_\theta(\cdot|s),
\label{eq:lk}
\end{equation}
with the drifting field $\mathbf{V}_{P_K, \pi_\theta}$ taking the empirical set $P_K$ as the positive set and $\pi_\theta$ as the negative distribution.

\begin{proposition}[Bounded approximation error of $\mathcal{L}_{\text{top-}K}$ to $\mathcal{L}_{\mathrm{PI}}$]
\label{prop:topk_limit}
Let $\rho := K/N$. With bounded Lipschitz kernel $k$ of Sec.~\ref{sec:prelim_drifting}, assume $Q_\phi(s, A)$ for $A \sim \pi_{\mathrm{old}}(\cdot | s)$ admits a strictly positive density at its $(1-\rho)$-quantile $q^\rho(s)$. Define the $\rho$-level-set tilting $\tilde{\pi}^\rho(a | s) := \rho^{-1}\, \mathds{1}[Q_\phi(s, a) \geq q^\rho(s)]\, \pi_{\mathrm{old}}(a | s)$. As $N \to \infty$ with $\rho$ fixed,
\begin{equation}
\mathcal{L}_{\text{top-}K}(\theta) \;\to\; \mathcal{L}_{\mathrm{drift}}(\theta;\, \tilde{\pi}^\rho, \pi_\theta), \qquad
\bigl|\mathcal{L}_{\mathrm{drift}}(\theta;\, \tilde{\pi}^\rho, \pi_\theta) - \mathcal{L}_{\mathrm{PI}}(\theta)\bigr| \;\leq\; C\, \overline{\mathrm{TV}}\bigl(\tilde{\pi}^\rho, \pi^+\bigr),
\label{eq:topk_bound}
\end{equation}
for any $\rho \in (0, 1]$ and a finite constant $C$ depending only on the kernel and action-space diameter, where $\overline{\mathrm{TV}}(p, q) := \mathbb{E}_s[\mathrm{TV}(p(\cdot | s), q(\cdot | s))]$. The derivation combines Glivenko-Cantelli on the empirical $(1-\rho)$-quantile with TV-Lipschitz continuity of the kernel mean shift; full proof in Appendix~\ref{app:topk_limit_proof}.
\end{proposition}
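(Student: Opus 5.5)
The argument splits into two parts that match the two claims of Eq.~\eqref{eq:topk_bound}: the large-$N$ limit of the surrogate loss, and the TV comparison between the two ideal losses. Both parts reduce to a single stability estimate for the kernel mean-shift field. For a fixed state $s$ and point $a$, write the attraction term as $\mathbf{V}^+_p(a) = N_p(a)/D_p(a)$, where
\[
N_p(a) = \mathbb{E}_{y\sim p}[k(a,y)(y-a)], \qquad D_p(a) = \mathbb{E}_{y\sim p}[k(a,y)].
\]
The repulsion term $\mathbf{V}^-_{\pi_\theta}$ is the same in all three losses, so only the attraction term varies. Because the stop-gradient fixes the target, the loss value at $\hat a$ is exactly $\|\mathbf{V}_{p,\pi_\theta}(\hat a|s)\|^2$. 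Therefore
\[
\bigl|\mathcal{L}_{\mathrm{drift}}(\theta;p,\pi_\theta)-\mathcal{L}_{\mathrm{drift}}(\theta;p',\pi_\theta)\bigr| \le \mathbb{E}_{s,\epsilon}\bigl[\|\mathbf{V}^+_p-\mathbf{V}^+_{p'}\|\,(\|\mathbf{V}_{p,\pi_\theta}\|+\|\mathbf{V}_{p',\pi_\theta}\|)\bigr].
\]
Each mean-shift vector is a convex combination of displacements $y-a$, so it has norm at most $\mathrm{diam}(\mathcal{A})$. The second factor is thus at most $4\,\mathrm{diam}(\mathcal{A})$.

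\textbf{Step 1 (TV-Lipschitz lemma).} I will show that $\|\mathbf{V}^+_p(a)-\mathbf{V}^+_{p'}(a)\|\le C_0\,\mathrm{TV}(p,p')$. The integrands $k(a,y)$ and $k(a,y)(y-a)$ are bounded by $\|k\|_\infty$ and $\|k\|_\infty\,\mathrm{diam}(\mathcal{A})$. So $|D_p-D_{p'}|$ and $\|N_p-N_{p'}\|$ are each at most $2\times(\text{that bound})\times\mathrm{TV}$. Applying the quotient rule $N/D-N'/D' = (N-N')/D + N'(D'-D)/(DD')$ gives
\[
\|\mathbf{V}^+_p-\mathbf{V}^+_{p'}\| \le \frac{\|N_p-N_{p'}\| + \mathrm{diam}(\mathcal{A})\,|D_p-D_{p'}|}{D_p}.
\]
\textbf{Main obstacle.} This bound needs a lower bound on the denominator. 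Since $\mathcal{A}$ is bounded and the Gaussian kernel is positive, $k(a,y)\ge k_{\min}:=\exp(-\mathrm{diam}(\mathcal{A})^2/(2h^2))>0$ on $\mathcal{A}\times\mathcal{A}$. Hence $D_p\ge k_{\min}$ for every probability measure supported on $\mathcal{A}$, and $C_0$ depends only on $\|k\|_\infty$, $k_{\min}$ and $\mathrm{diam}(\mathcal{A})$. This is where the claim that $C$ depends "only on the kernel and the action-space diameter" comes from, and I would state the boundedness of $\mathcal{A}$ explicitly. Averaging over $s$ then yields the second inequality, with $C = 4\,\mathrm{diam}(\mathcal{A})\,C_0$ and $\overline{\mathrm{TV}}$ appearing after integration. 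Note that this part does not depend on the choice of $\rho$.

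\textbf{Step 2 (the $N\to\infty$ limit).} Fix $s$. Let $F$ be the CDF of $Q_\phi(s,A)$ with $A\sim\pi_{\mathrm{old}}$, and let $\hat P_K$ be the uniform measure on the top-$K$ candidates. Because $F$ has a positive density at $q^\rho$, the quantile is unique. Glivenko--Cantelli then implies that the empirical $(1-\rho)$-quantile $\hat q_N$ (the $K$-th largest $Q$ value) converges almost surely to $q^\rho$. Next I compare $\hat P_K$ with the empirical tilted measure
\[
\tilde P_N = \frac{1}{N\rho}\sum_j \mathds{1}[Q(a^{(j)})\ge q^\rho]\,\delta_{a^{(j)}}.
\]
The two measures differ only in the candidates whose $Q$ values lie between $\hat q_N$ and $q^\rho$, together with a normalization mismatch. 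Both effects are controlled by $|F_N(\hat q_N)-F_N(q^\rho)|$, which tends to $0$. For a fixed $a$, the strong LLN applied to the bounded integrands gives $N_{\tilde P_N}\to N_{\tilde\pi^\rho}$ and $D_{\tilde P_N}\to D_{\tilde\pi^\rho}$.

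Combining these, the Step~1 quotient argument shows $\mathbf{V}^+_{\hat P_K}(a)\to\mathbf{V}^+_{\tilde\pi^\rho}(a)$. I will use the Step~1 inequalities with the bounded integrands directly, rather than going through TV, since TV between an atomic measure and a density does not vanish. Finally, the integrands are uniformly bounded by $4\,\mathrm{diam}(\mathcal{A})^2$, so dominated convergence over $(s,\epsilon)$ gives $\mathcal{L}_{\text{top-}K}\to\mathcal{L}_{\mathrm{drift}}(\theta;\tilde\pi^\rho,\pi_\theta)$.
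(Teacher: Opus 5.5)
Your proof is correct and shares the paper's skeleton: the loss value at $\hat a$ is $\|\mathbf{V}_{p,\pi_\theta}(\hat a|s)\|^2$, the repulsion cancels, and the difference of squares costs a factor $4\,\mathrm{diam}(\mathcal{A})$. The remaining gap is then handled by a quotient estimate on $\mathbf{V}^+_p = N_p/D_p$, so your part (ii) is essentially the paper's.

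You differ in two places.

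First, you obtain the denominator bound uniformly: $D_p(a)\ge \exp(-\mathrm{diam}(\mathcal{A})^2/(2h^2))$ for all $p$ supported on a bounded $\mathcal{A}$ and $a\in\mathcal{A}$. The paper instead posits a $k_{\min}$ that is positive ``when $p,q$ have local full support''. Your version is what actually justifies the claim that $C$ depends only on the kernel and the diameter. Using $\|N_{p'}/D_{p'}\|\le\mathrm{diam}(\mathcal{A})$ also gives you $1/k_{\min}$ rather than $1/k_{\min}^2$. It further makes visible that $C$ is of order $\mathrm{diam}(\mathcal{A})^2\exp(\mathrm{diam}(\mathcal{A})^2/(2h^2))$: finite, but not small at the bandwidths used in the experiments.

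Second, and more substantively, the paper's $N\to\infty$ limit asserts $P_K\to\tilde\pi^\rho$ in $W_2$. It then applies the TV-Lipschitz lemma as if $P_K\to\tilde\pi^\rho$ in TV. That fails for the reason you give: a $K$-atom measure is at TV distance $1$ from any non-atomic $\tilde\pi^\rho$. Your route is a sound repair. You compare $\hat P_K$ with the fixed-threshold measure $\tilde P_N$ on the same atoms, where the discrepancy is $O(|F_N(\hat q_N)-F_N(q^\rho)|/\rho + 1/K)$ and does vanish. You then apply the SLLN to the bounded integrands of $N$ and $D$ and use $D_{\tilde\pi^\rho}\ge k_{\min}$. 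The paper's step could alternatively be rescued via weak convergence and continuity of $k(a,\cdot)$ and $k(a,\cdot)(\cdot-a)$, but not via TV.

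One detail should be made explicit. Your almost-sure convergence holds for fixed $(s,\hat a)$, so the final dominated-convergence step needs one of two fixes. Either take it over the joint law of $(s,\epsilon)$ and the candidate draws (Fubini), or make the convergence uniform in $a\in\mathcal{A}$ by a Glivenko--Cantelli argument for the Lipschitz kernel class.
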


To anchor $\pi_\theta$ to the behavior distribution $\pi_\beta$, we additionally use a behavior cloning (BC) drift loss with empirical positives drawn from the replay buffer $\mathcal{D}$:
\begin{equation}
\mathcal{L}_{\mathrm{BC}}(\theta) = \mathcal{L}_{\mathrm{drift}}(\theta;\, \mathcal{D}, \pi_\theta) \;=\; \mathbb{E}_{(s, a) \sim \mathcal{D},\, \epsilon}\bigl[\|\hat{a} - \mathrm{sg}(\hat{a} + \mathbf{V}_{\mathcal{D}, \pi_\theta}(\hat{a} | s))\|^2\bigr], \quad \hat{a} \sim \pi_\theta(\cdot|s).
\label{eq:lbc}
\end{equation}
Although $\mathcal{L}_{\mathrm{BC}}$ and $\mathcal{L}_{\text{top-}K}$ originate from different objectives — data anchoring versus $Q$-maximizing policy improvement — they are both instances of $\mathcal{L}_{\mathrm{drift}}(\theta;\, p, q)$ (Eq.~\eqref{eq:Ldrift_general}), sharing the same negative $q = \pi_\theta$ and differing only in the empirical positive set:
\begin{itemize}[leftmargin=*]
    \item $\mathcal{L}_{\mathrm{BC}}(\theta) = \mathcal{L}_{\mathrm{drift}}(\theta;\, \mathcal{D}, \pi_\theta)$\,: replay buffer $\mathcal{D}$ as the positive (data anchor);
    \item $\mathcal{L}_{\text{top-}K}(\theta) = \mathcal{L}_{\mathrm{drift}}(\theta;\, P_K, \pi_\theta)$\,: top-$K$ self-generated set $P_K(s)$ as the positive ($Q$-improvement).
\end{itemize}
The combined loss $\mathcal{L}(\theta) = \mathcal{L}_{\mathrm{BC}}(\theta) + \lambda\, \mathcal{L}_{\text{top-}K}(\theta)$ with $\lambda > 0$ anchors $\pi_\theta$ to $\pi_\beta$ while pushing it toward higher-$Q$ regions. The shared drift form makes implementation simple: a single drift-loss routine evaluates both terms by swapping the positive set between $\mathcal{D}$ and $P_K(s)$. Algorithm~\ref{alg:dfp} summarizes the full online fine-tuning procedure.

\begin{algorithm}[t]
\caption{Drifting Field Policy (DFP), online fine-tuning}
\label{alg:dfp}
\begin{algorithmic}
\State \textbf{Input:} BC-pretrained policy $\pi_\theta(\cdot | s) = [f_\theta(\cdot, s)]_\# p_\epsilon$ and critic $Q_\phi$, replay buffer $\mathcal{D}$ initialized with offline data, $N$ candidates for $\mathcal{L}_{\text{top-}K}$, $N'$ candidates for best-of-$N'$ execution~\cite{ghasemipour2021:EMAQ}, $N_{\text{gen}}$ samples of  $\hat{a} \sim \pi_\theta(\cdot | s)$
\State Initialize old policy $\pi_{\mathrm{old}} \leftarrow \pi_\theta$
\For{online training step $k = 1, 2, \ldots$}
    \State Observe $s_k$; draw $\{a^{(i)}\}_{i=1}^{N'} \sim \pi_\theta(\cdot | s_k)$ and execute $a^\star_k \leftarrow \arg\max_{j} Q_\phi(s_k, a^{(j)})$
    \State Receive $s_{k+1}, r_k$; append $(s_k, a^\star_k, r_k, s_{k+1})$ to $\mathcal{D}$
    \State Sample mini-batch $\{(s^{(b)}, a^{(b)})\}_{b=1}^B \sim \mathcal{D}$
    \For{$b = 1, 2, \ldots, B$} \Comment{in parallel}
        \State Generate $\hat{a}^{(b,1)}, \ldots, \hat{a}^{(b,N_{\text{gen}})}  \sim \pi_\theta(\cdot | s^{(b)})$ \Comment{$\hat{a}$ shared by both $\mathcal{L}_{\mathrm{BC}}$ and $\mathcal{L}_{\text{top-}K}$}
        \State Draw $N$ candidates $\tilde{a}^{(b,1)}, \ldots, \tilde{a}^{(b,N)} \sim \pi_{\mathrm{old}}(\cdot | s^{(b)})$ 
        \State Select $P_K(s^{(b)}) \;=\; \bigl\{\tilde{a}^{(b, j)} \;:\; j \in \mathrm{argTopK}_{j\in[N]} \, Q_\phi(s^{(b)}, \tilde{a}^{(b,j)})\bigl\}$
    \EndFor
    \State Update $\theta$ by minimizing $\mathcal{L}_{\mathrm{BC}}(\theta) + \lambda\, \mathcal{L}_{\text{top-}K}(\theta)$ via Eqs.~\eqref{eq:lbc},~\eqref{eq:lk}
    \State Update $\phi$ via the Bellman backup of Eq.~\eqref{eq:critic_td}
    \State Update old policy: $\theta_{\mathrm{old}} \leftarrow \tau_\mathrm{EMA}\, \theta + (1 - \tau_\mathrm{EMA})\, \theta_{\mathrm{old}}$
\EndFor
\end{algorithmic}
\end{algorithm}

\subsection{Why Drifting Models for RL Finetuning?}
\label{sec:why_drifting}
Drifting policies (DFP) and few-step diffusion or flow policies~\cite{Zhan2026:MVP, Park2025:FQL} share the pushforward representation $\pi_\theta(\cdot | s) = [f_\theta(\cdot, s)]_\# p_\epsilon$ but parameterize $f_\theta$ in structurally different ways. Drifting policies parameterize $f_\theta$ \emph{directly} as a single-pass network whose output is the action itself. Diffusion and flow policies parameterize $f_\theta$ \emph{indirectly} through a time-indexed velocity field $v_\theta(a(t), t, s)$ along stochastic processes or their ODE counterparts~\cite{Ho:2020DDPM, Lipman:2023FM}; few-step variants~\cite{geng2025:mean, frans2024:shortcut, Song:2023CM} amortize the velocity integration along the trajectory into one or a few network calls but retain the time-indexed velocity field parameterization. We highlight two consequences of this choice for RL finetuning.


\paragraph{Probability space descent vs.\ velocity-field re-fitting.}
In the ideal nonparametric view, DFP corresponds to a WGF descent direction on \emph{probability space}: the drifting field transports policy samples toward $\pi^+$ along the steepest-descent direction of $\mathrm{KL}(\pi_\theta \,\|\, \pi^+)$ through the pushforward map $f_\theta(\epsilon, s)$. Diffusion and flow policies~\cite{wang2022:DiffusionQL, ding2024:QVPO}, including one-step variants~\cite{Zhan2026:MVP, sheng2026:mp1, Espinosa2025:shortcutpolicy}, do not admit such a direct descent: they retain a time-indexed velocity prediction $v_\theta(a(t), t, s)$ from which an action is generated through ODE integration~\cite{Song:2021ScoreSDE, Lipman:2023FM, Ho:2020DDPM, Song:2021DDIM}, and one-step inference (1 NFE)~\cite{Song:2023CM, frans2024:shortcut, geng2025:mean} does not eliminate this ODE dependence at training time. Shifting $\pi_\theta$ toward $\pi^+$ therefore requires globally re-fitting $v_\theta$, manifesting as two coupled burdens: a self-consistency constraint along the ODE (e.g., the MeanFlow identity~\cite{geng2025:mean}) that couples the velocity field across time in one-step variants, and trajectory-level credit assignment that spreads the $Q$-improvement signal across the integration path~\cite{Ren2025:DPPO,Mcallister2026:FPO,Li2026:QAM}. DFP sidesteps both: a single-pass pushforward map has no ODE structure, so output-level supervision realizes a direct WGF descent direction on probability space.

\paragraph{Built-in repulsion from current samples.}
The policy improvement loss of Eq.~\eqref{eq:ideal_pi_loss} pairs attraction toward $Q$-improvement targets ($\mathbf{V}^+_{\pi^+}$) with repulsion from the current policy's own samples ($\mathbf{V}^-_{\pi_\theta}$), which typically lie in lower-$Q$ regions than the targets. We conjecture that this built-in repulsion further pushes the policy away from low-$Q$ regions, a structural mechanism not present in diffusion or flow parameterizations.

\section{Related Work}
\label{sec:related_work}

\paragraph{Offline-to-Online RL.} A key challenge in offline-to-online RL is balancing the need to stay on the offline data support, avoiding out-of-distribution actions, against the exploration needed for online improvement. Existing methods address this intricate balance through (i)~conservative critic regularization~\cite{kumar2020:CQL, kostrikov2022:IQL, nakamoto2023:CalQL}, (ii)~behavior-cloning regularization on the policy~\cite{Nair2020:AWAC, tarasov2023:ReBRAC}, and (iii)~replay strategies mixing offline and online transitions~\cite{ball2023:RLPD, song2022:HybridRL}. While crucial for stability, these methods ultimately depend on the underlying policy parameterization to support both behavior cloning during offline pretraining and effective improvement under continually shifting targets during online fine-tuning, motivating our novel generative policy introduced below.

\paragraph{Few-Step Generative Policies.} In continuous-control RL, recent work replaces Gaussian policies~\cite{Lillicrap2016:DDPG, Fujimoto2018:TD3, Haarnoja2018:SAC} with generative policies that capture multimodal action distributions~\cite{wang2022:DiffusionQL, hansen2023:IDQL, ding2024:QVPO}. The field has since shifted toward few-step or one-step backbones for inference efficiency~\cite{Zhan2026:MVP, wang2026:reformulationMF, sheng2026:mp1, Ding2024:consistencypolicy, Espinosa2025:shortcutpolicy}. These backbones, however, still generate actions through a time-indexed SDE/ODE trajectory. When applied to RL fine-tuning, they require self-consistent updates across all intermediate timesteps to adapt to a shifting target distribution. We instead adopt a non-ODE policy based on \emph{drifting models}~\cite{Deng2026:Drifting} that generates actions in a single forward pass without any time variable, reducing adaptation to shifting only the network's output rather than re-aligning the entire generation trajectory.

\paragraph{Drifting Models.} Drifting models~\cite{Deng2026:Drifting} are one-step generative models that evolve the pushforward distribution at training time via stop-gradient regression onto a kernel-based mean-shift target, outperforming ODE-based few-step models~\cite{geng2025:mean, Song:2023CM, frans2024:shortcut} on image generation. Follow-up analyses relate the drift field to score matching~\cite{Lai2026:Unified, Turan2026:Drifting} and to the particle velocity of a Wasserstein-2 gradient flow under KDE-smoothed densities~\cite{Cao2026:GFD, He2026:Sinkhorn}. Concurrent works extend drifting models to control: Ada3Drift~\cite{xu2026:Ada3drift} and KDP~\cite{Puthumanaillam2026:KeyedDrift} only target offline imitation learning, while DBPO~\cite{gao2026:DBPO} targets offline-to-online fine-tuning but projects the drifting policy onto a unimodal Gaussian for PPO~\cite{Schulman2017:PPO} updates, sacrificing its expressiveness. We are the first to interpret drifting model RL fine-tuning as a Wasserstein-2 gradient flow descent on policy space and derive an algorithm that updates the drifting policy toward high-reward actions within a trust region, with a tractable surrogate of the otherwise intractable policy improvement loss.

\section{Experiments}
\label{sec:results}
Additional details and results are provided in the appendix: offline RL experiments (Appendix~\ref{app:offline_rl}), full offline-to-online results (Appendix~\ref{app:full-offline-to-online}), implementation details and hyperparameters (Appendix~\ref{app:hyperparameters}), and training and inference cost analysis (Appendix~\ref{app:timecost_analysis}).


\subsection{Experimental Setup}
\paragraph{Benchmarks.} We follow the experimental setup of Mean Velocity Policy (MVP)~\cite{Zhan2026:MVP}, evaluating on 3 tasks from Robomimic~\cite{robomimic2021} (\texttt{Lift}, \texttt{Can}, \texttt{Square}) under the multi-human (MH) datasets and 6 tasks from OGBench~\cite{park2024:OGbench}, with three tasks each from \texttt{cube-double} and \texttt{cube-triple} (\texttt{task-2/3/4} per environment). We further include three tasks from the \texttt{cube-quadruple} environment (\texttt{cube-quadruple-task-2/3/4}), resulting in a total of 12 tasks.


\paragraph{Baselines against DFP.}
We compare against the same baselines as MVP~\cite{Zhan2026:MVP}, spanning three categories: (i)~\emph{multi-step inference}: \textbf{BFN}~\cite{ghasemipour2021:EMAQ} and \textbf{QC-BFN}~\cite{Li2025:QC} train a multi-step BC flow policy and perform Best-of-$N$ extraction at inference (QC-BFN adds action-chunking on top of BFN); (ii)~\emph{distilled one-step}: \textbf{FQL}~\cite{Park2025:FQL} and \textbf{QC-FQL}~\cite{Li2025:QC} distill a one-step policy from a multi-step BC flow policy under a $Q$-maximization objective via the critic Jacobian $\nabla_a Q_\phi$ (QC-FQL adds action-chunking on top of FQL); (iii)~\emph{teacher-free one-step}: \textbf{MVP}~\cite{Zhan2026:MVP} trains a one-step MeanFlow~\cite{geng2025:mean} policy from scratch, jointly performing BC and policy improvement (the latter by imitating the best-of-$N$ action) on the same network. MVP is the closest baseline to DFP: both are teacher-free one-step policies with action-chunking, differing only in the generative policy parameterization (MeanFlow~\cite{geng2025:mean} vs. drifting models~\cite{Deng2026:Drifting}) and the corresponding training objectives, which we isolate in Sec.~\ref{subsec:main_ablation}.

\vspace{-1.0\baselineskip}
\paragraph{Hyperparameters.} We use $N=16$ candidate actions, with $K=2$ on Robomimic~\cite{robomimic2021} and $K=4$ on OGBench~\cite{park2024:OGbench}, and the top-$K$ loss coefficient $\lambda=0.5$ by default unless otherwise stated. Full hyperparameter configurations are listed in Appendix~\ref{app:hyperparameters}.

\subsection{Comparisons to Baselines}
\label{subsec:offline_to_online_results}


\newcommand{\val}[2]{#1{\scriptsize$\,\pm\,$#2}}
\begin{table}[t]
\centering
\footnotesize
\definecolor{stripegray}{gray}{0.96}
\caption{Success rate (\%) on Robomimic~\cite{robomimic2021} and OGBench~\cite{park2024:OGbench} tasks under the offline-to-online RL. Each cell reports \texttt{mean} $\pm$ \texttt{std} over 5 seeds. Best result per column in \textbf{bold}; second-best \underline{underlined}.}
\label{tab:main_results}
\setlength{\tabcolsep}{3pt}
\renewcommand{\arraystretch}{1.15}
\resizebox{\textwidth}{!}{%
\begin{tabular}{@{}lccccccccccccc@{}}
\toprule
& \multicolumn{3}{c}{\textbf{Robomimic}}
& \multicolumn{3}{c}{\textbf{Cube-double}}
& \multicolumn{3}{c}{\textbf{Cube-triple}}
& \multicolumn{3}{c}{\textbf{Cube-quadruple-100m}}
& \\
\cmidrule(lr){2-4}\cmidrule(lr){5-7}\cmidrule(lr){8-10}\cmidrule(lr){11-13}
\textbf{Method}
& \texttt{lift} & \texttt{square} & \texttt{can}
& \texttt{task2} & \texttt{task3} & \texttt{task4}
& \texttt{task2} & \texttt{task3} & \texttt{task4}
& \texttt{task2} & \texttt{task3} & \texttt{task4}
& \textbf{Avg.} \\
\midrule

BFN~\cite{ghasemipour2021:EMAQ}
& \val{97.6}{2} & \val{32.8}{8} & \val{82.0}{2} 
& \val{86.0}{5} & \val{88.8}{5} & \val{27.2}{8} 
& \val{7.6}{9} & \val{6.8}{3} & \val{0.0}{0} 
& \val{32.4}{21} & \val{0.0}{0} & \val{0.0}{0} 
& 38.4 \\

QC-BFN~\cite{Li2025:QC} 
& \val{99.6}{1} & \underline{\val{88.4}{4}} & \underline{\val{90.6}{3}} 
& \underline{\val{99.8}{0}} & \textbf{\val{99.8}{0}} & \val{92.6}{6} 
& \val{87.4}{10} & \underline{\val{80.8}{4}} & \val{33.4}{9} 
& \val{95.8}{2} & \val{63.2}{10} & \val{74.2}{11} 
& 83.8 \\
\midrule 
FQL~\cite{Park2025:FQL}
& \val{96.8}{2} & \val{10.8}{7} & \val{58.4}{8}
& \val{93.2}{8} & \val{91.2}{5} & \val{6.0}{6} 
& \val{0.4}{1} & \val{6.4}{8} & \val{0.0}{0}
& \val{0.0}{0} & \val{0.0}{0} & \val{0.0}{0} 
& 30.3 \\

QC-FQL~\cite{Li2025:QC} 
& \textbf{\val{100.0}{0}} & \val{72.0}{9} & \textbf{\val{94.4}{2}} 
& \textbf{\val{100.0}{0}} & \textbf{\val{99.8}{0}} & \textbf{\val{99.8}{0}} 
& \underline{\val{88.2}{2}} & \val{60.4}{12} & \underline{\val{51.4}{24} }
& \underline{\val{98.0}{2}} & \underline{\val{85.0}{7}} & \underline{\val{92.2}{7}} 
& \underline{86.8} \\
\midrule 
MVP~\cite{Zhan2026:MVP} 
& \underline{\val{99.8}{0}} & \val{79.4}{4} & \val{83.6}{5} 
& \val{98.4}{1} & \val{98.6}{1} & \val{94.8}{4} 
& \val{86.2}{4} & \val{57.2}{10} & \val{31.0}{20} 
& \val{96.6}{2} & \val{47.2}{30} & \val{91.2}{2} 
& 80.3 \\



\rowcolor{our_color!10}
\textcolor{our_color}{\textbf{DFP (Ours)}}
& \OC{\textbf{\val{100.0}{0}}} & \OC{\textbf{\val{93.2}{2}}} & \OC{\underline{\val{90.6}{3}}}
& \OC{\textbf{\val{100.0}{0}}} & \OC{\underline{\val{99.6}{1}}} & \OC{\underline{\val{99.6}{1}}}
& \OC{\textbf{\val{98.4}{1}}} & \OC{\textbf{\val{91.6}{2}}} & \OC{\textbf{\val{81.2}{6}}}
& \OC{\textbf{\val{99.6}{1}}} & \OC{\textbf{\val{96.6}{2}}} & \OC{\textbf{\val{99.0}{2}}}
& \OC{\textbf{95.8}} \\
\bottomrule
\end{tabular}%
}
\end{table}

As summarized in Tab.~\ref{tab:main_results}, DFP achieves the highest average success rate of $95.8\%$, ranking first on $9$ of $12$ tasks and second-best on the remaining $3$, outperforming the strongest baseline QC-FQL~\cite{Li2025:QC} ($86.8\%$) by $+9.0$ pp. Despite generating actions in a single forward pass, DFP surpasses even the best multi-step policy QC-BFN~\cite{Li2025:QC} ($83.8\%$, $+12.0$ pp), with the gap widening on the harder \texttt{cube-triple} and \texttt{cube-quadruple} splits where multi-step BC alone is insufficient. 

The most informative comparison is against MVP~\cite{Zhan2026:MVP}, the closest baseline: both methods train a single one-step policy with no multi-step teacher. DFP improves the average score by $+15.5$ pp over MVP~\cite{Zhan2026:MVP} and outperforms on every task, with particularly large gains on the multimodal long-horizon tasks (e.g., \texttt{cube-triple-task4}: $31.0 \!\to\! 81.2$; \texttt{cube-quadruple-task3}: $47.2 \!\to\! 96.6$). As shown by the training curves in Fig.~\ref{fig:benchmark}, DFP outperforms MVP~\cite{Zhan2026:MVP} consistently across time, demonstrating both faster convergence in the online stage and better behavior cloning capability in the offline stage. The full result table, including both offline and online phases, is in Appendix~\ref{app:full-offline-to-online}.

\begin{figure}[!ht]
\centering
\captionsetup[subfigure]{justification=centering, aboveskip=4pt, belowskip=4pt}
\subfloat[Robomimic-lift\label{subFig:main_lift}]
{\includegraphics[width = 0.32\textwidth,trim={0cm 0.2cm 0.0cm 0.2cm}, clip]{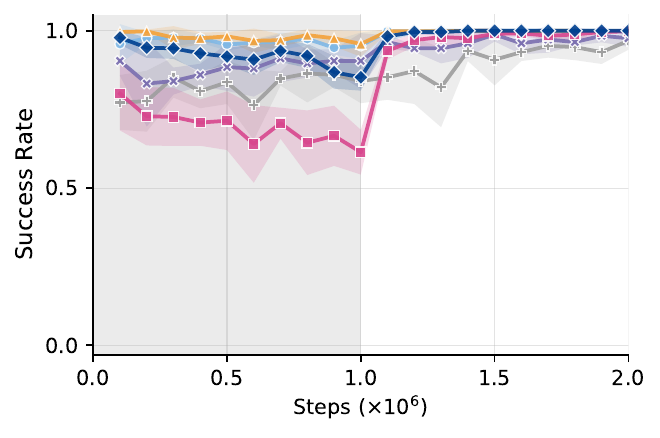}} 
\subfloat[Robomimic-square\label{subFig:main_square}]
{\includegraphics[width = 0.32\textwidth,trim={0cm 0.2cm 0.0cm 0.2cm}, clip]{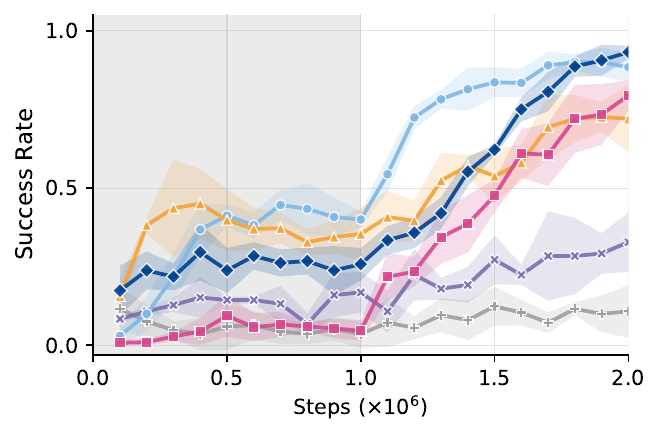}}
\subfloat[Robomimic-can\label{subFig:main_can}]
{\includegraphics[width = 0.32\textwidth,trim={0cm 0.2cm 0.0cm 0.2cm}, clip]{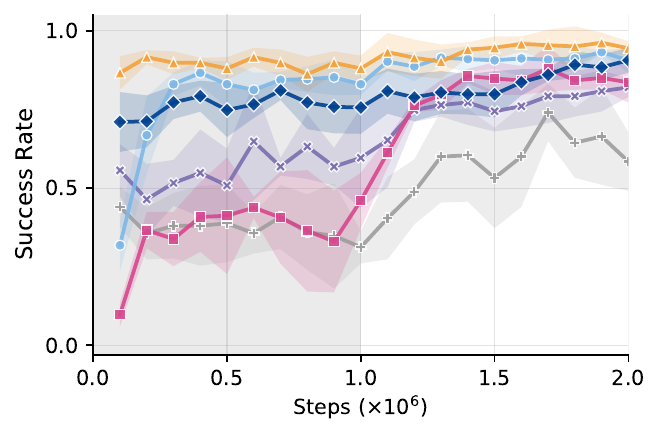}} \\[-0.5em]
\subfloat[Cube-double-task2\label{subFig:main_double-2}]
{\includegraphics[width = 0.32\textwidth,trim={0cm 0.2cm 0.0cm 0.2cm}, clip]{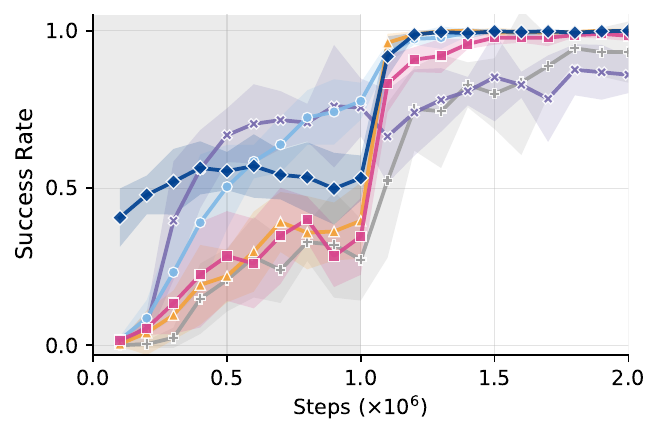}} 
\subfloat[Cube-double-task3\label{subFig:main_double-3}]
{\includegraphics[width = 0.32\textwidth,trim={0cm 0.2cm 0.0cm 0.2cm}, clip]{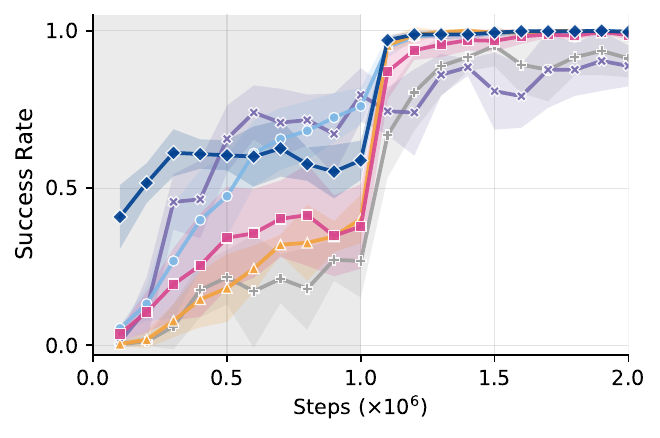}} 
\subfloat[Cube-double-task4\label{subFig:main_double-4}]
{\includegraphics[width = 0.32\textwidth,trim={0cm 0.2cm 0.0cm 0.2cm}, clip]{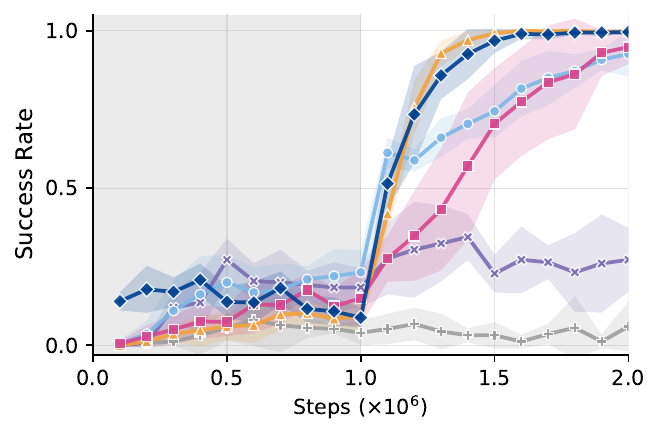}} \\[-0.5em]
\subfloat[Cube-triple-task2\label{subFig:main_triple-2}]
{\includegraphics[width = 0.32\textwidth,trim={0cm 0.2cm 0.0cm 0.2cm}, clip]{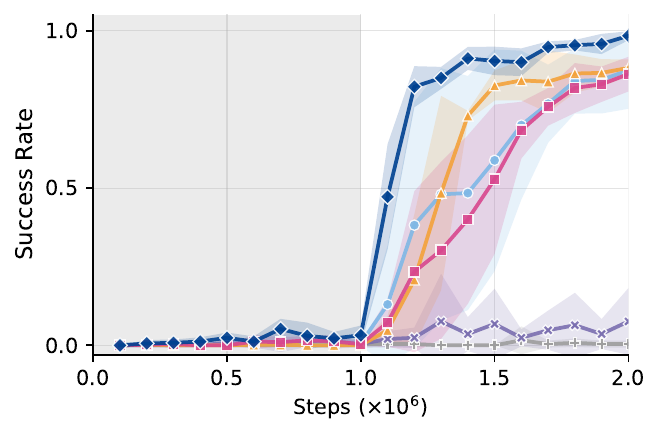}} 
\subfloat[Cube-triple-task3\label{subFig:main_triple-3}]
{\includegraphics[width = 0.32\textwidth,trim={0cm 0.2cm 0.0cm 0.2cm}, clip]{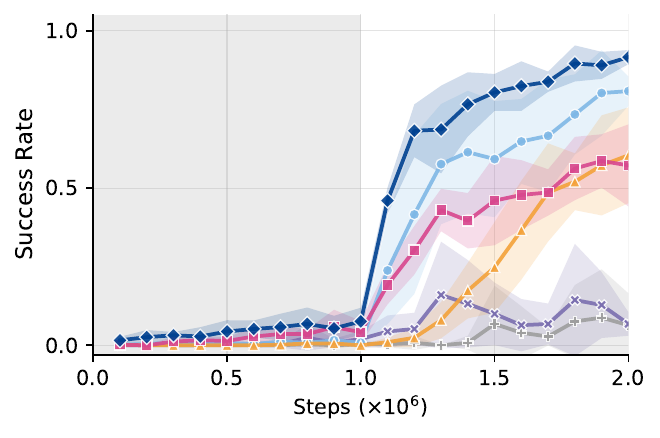}}
\subfloat[Cube-triple-task4\label{subFig:main_triple-4}]
{\includegraphics[width = 0.32\textwidth,trim={0cm 0.2cm 0.0cm 0.2cm}, clip]{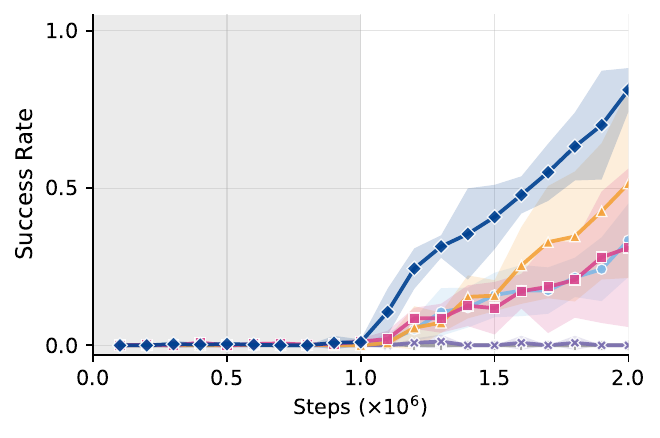}} \\[-0.5em]
\subfloat[Cube-quad-task2\label{subFig:main_quad-2}]
{\includegraphics[width = 0.32\textwidth,trim={0cm 0.2cm 0.0cm 0.2cm}, clip]{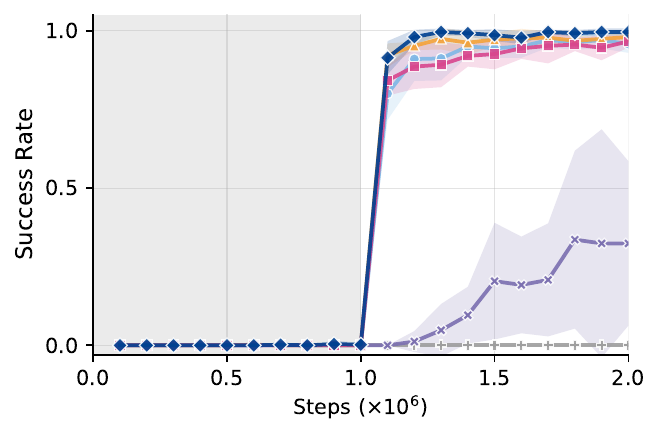}} 
\subfloat[Cube-quad-task3\label{subFig:main_quad-3}]
{\includegraphics[width = 0.32\textwidth,trim={0cm 0.2cm 0.0cm 0.2cm}, clip]{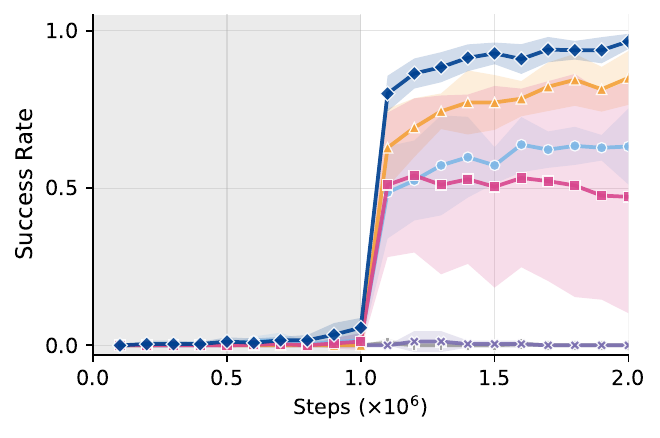}}
\subfloat[Cube-quad-task4\label{subFig:main_quad-4}]
{\includegraphics[width = 0.32\textwidth,trim={0cm 0.2cm 0.0cm 0.2cm}, clip]{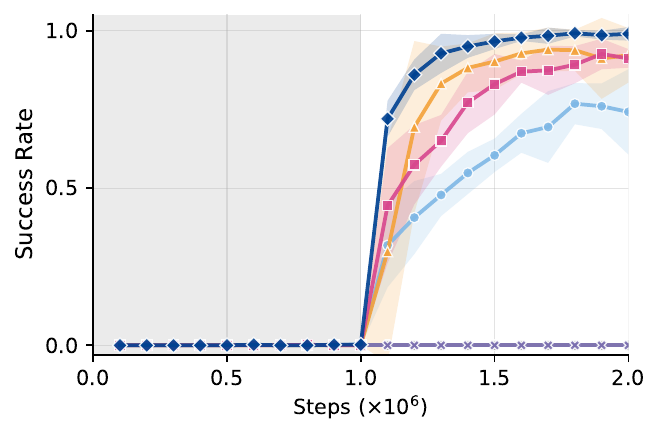}} \\[0.2em]
{\includegraphics[width = 0.75\textwidth, clip]{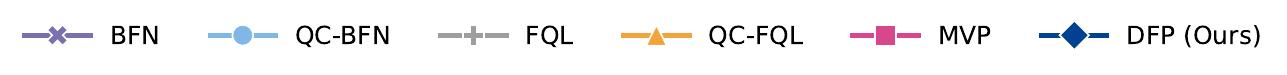}} 
\caption{\textbf{Success rate over training steps across Robomimic~\cite{robomimic2021} and OGBench~\cite{park2024:OGbench}.} Solid lines and shaded regions show the mean and 95\% confidence interval over five runs. Gray and white backgrounds indicate the offline and online phases, respectively.}
\label{fig:benchmark}
\end{figure}

\subsection{Analysis: Drifting vs. MeanFlow under Identical Loss}
\label{subsec:drifting_vs_meanflow}

To isolate the contribution of each design choice that distinguishes DFP from MVP~\cite{Zhan2026:MVP}, we ablate two factors along two axes: the policy backbone (MeanFlow~\cite{geng2025:mean} vs.~drifting model~\cite{Deng2026:Drifting}) and the training objective (BC only vs.~BC $+\, \mathcal{L}_{\text{top-}K}$). For the MeanFlow backbone, applying $\mathcal{L}_{\text{top-}K}$ reduces to applying its native MeanFlow identity loss to the top-$K$ candidates. The results are summarized in Tab.~\ref{tab:topk_loss_ablation}. Even without $\mathcal{L}_{\text{top-}K}$ (BC only), DFP yields a $+8.1$\,pp in average success rate over MVP, confirming the drifting parameterization's stronger behavior cloning capability. Adding $\mathcal{L}_{\text{top-}K}$ benefits the two backbones asymmetrically: MeanFlow gains only marginally ($80.3\% \!\to\! 82.7\%$, $+2.4$\,pp), while the drifting backbone gains substantially ($88.4\% \!\to\! 95.8\%$, $+7.4$\,pp). 

This asymmetry follows from the parameterization-level distinction discussed in Sec.~\ref{sec:wgf_theory}. On the drifting backbone, the top-$K$ supervision is applied directly to generated actions through the pushforward map $f_\theta$ in a single forward pass, approximating a WGF descent direction toward the high-$Q$ candidates. On MeanFlow, the same $K$ targets must instead enter through its identity loss~\cite{geng2025:mean} and re-fit $v_\theta$ globally, where the self-consistency constraint couples $v_\theta$ across time and trajectory-level credit assignment spreads the supervision along the integration path; the same targets therefore yield a smaller shift in $\pi_\theta$. See Appendix~\ref{app:lk-asymmetry} for training curve comparisons.

\subsection{Ablation Study}
\label{subsec:main_ablation}
Refer to Sec.~\ref{app:k-ablation-curves} and Sec.~\ref{app:lambda_ablation} in the appendix for more comprehensive ablation results.

\begin{table}[H]
\centering
\footnotesize
\definecolor{stripegray}{gray}{0.96}
\caption{Ablation on the $\mathcal{L}_{\text{top-}K}$ for MVP~\cite{Zhan2026:MVP} and DFP. Success rate (\%) on Robomimic and OGBench tasks under the offline-to-online protocol. Each cell reports \texttt{mean} $\pm$ \texttt{std} over 5 seeds. Best result per column in \textbf{bold}; second-best \underline{underlined}.}
\label{tab:topk_loss_ablation}
\setlength{\tabcolsep}{3pt}
\renewcommand{\arraystretch}{1.15}
\resizebox{\textwidth}{!}{%
\begin{tabular}{@{}llcccccccccccccc@{}}
\toprule
& & 
& \multicolumn{3}{c}{\textbf{Robomimic}}
& \multicolumn{3}{c}{\textbf{Cube-double}}
& \multicolumn{3}{c}{\textbf{Cube-triple}}
& \multicolumn{3}{c}{\textbf{Cube-quadruple-100m}}
& \\
\cmidrule(lr){4-6}\cmidrule(lr){7-9}\cmidrule(lr){10-12}\cmidrule(lr){13-15}
\textbf{Method} & \textbf{Backbone} & \textbf{$\mathcal{L}_{\text{top-}K}$}
& \texttt{lift} & \texttt{square} & \texttt{can}
& \texttt{task2} & \texttt{task3} & \texttt{task4}
& \texttt{task2} & \texttt{task3} & \texttt{task4}
& \texttt{task2} & \texttt{task3} & \texttt{task4}
& \textbf{Avg.} \\
\midrule




MVP~\cite{Zhan2026:MVP} 
& \textbf{MeanFlow} & \xmark 
& \underline{\val{99.8}{0}} & \val{79.4}{4} & \val{83.6}{5} 
& \val{98.4}{1} & \val{98.6}{1} & \val{94.8}{4} 
& \val{86.2}{4} & \val{57.2}{10} & \val{31.0}{20} 
& \val{96.6}{2} & \val{47.2}{30} & \val{91.2}{2} 
& 80.3 \\

MVP w/ $\mathcal{L}_{\text{top-}K}$
& \textbf{MeanFlow} & \cmark 
& \textbf{\val{100.0}{0}} & \val{81.6}{5} & \val{86.2}{6}
& \underline{\val{99.6}{1}} & \underline{\val{98.8}{1}} & \underline{\val{96.6}{1}}
& \val{78.0}{15} & \val{60.6}{13} & \val{30.4}{11}
& \val{97.2}{1} & \val{69.2}{15} & \val{94.2}{4} 
& 82.7 \\
\midrule

DFP w/o $\mathcal{L}_{\text{top-}K}$
& \textbf{Drifting} & \xmark 
& \textbf{\val{100.0}{0}} & \underline{\val{88.6}{2}} & \underline{\val{90.4}{5}}
& \val{99.2}{1} & \textbf{\val{99.6}{1}} & \val{96.0}{3} 
& \underline{\val{91.4}{3}} & \underline{\val{83.2}{4}} & \underline{\val{31.2}{7}}
& \underline{\val{97.6}{1}} & \underline{\val{88.8}{3}} & \underline{\val{95.2}{3}}
& \underline{88.4} \\

\rowcolor{our_color!10}
\textcolor{our_color}{\textbf{DFP (Ours)}}
& \OC{\textbf{Drifting}} & \OC{\cmark}
& \OC{\textbf{\val{100.0}{0}}} & \OC{\textbf{\val{93.2}{2}}} & \OC{\textbf{\val{90.6}{3}}}
& \OC{\textbf{\val{100.0}{0}}} & \OC{\textbf{\val{99.6}{1}}} & \OC{\textbf{\val{99.6}{1}}}
& \OC{\textbf{\val{98.4}{1}}} & \OC{\textbf{\val{91.6}{2}}} & \OC{\textbf{\val{81.2}{6}}}
& \OC{\textbf{\val{99.6}{1}}} & \OC{\textbf{\val{96.6}{2}}} & \OC{\textbf{\val{99.0}{2}}}
& \OC{\textbf{95.8}} \\
\bottomrule
\end{tabular}%
}
\vspace{-1.0\baselineskip}
\end{table}
\newcolumntype{C}[1]{>{\centering\arraybackslash}p{#1}}
\begin{table}[H]
\centering
\scriptsize
\setlength{\tabcolsep}{4pt}
\renewcommand{\arraystretch}{1.15}
\begin{minipage}[t]{0.46\textwidth}
\centering
\caption{Top-$K$ loss weight $\lambda$ ablation on cube-quadruple tasks, \texttt{mean} and \texttt{std} over 5 seeds. \textbf{Bold}/\underline{underline}: best/second-best.}
\label{tab:beta_ablation}
\begin{tabular}{@{}lcccc@{}}
\toprule
Cube-4. & \textbf{task 2} & \textbf{task 3} & \textbf{task 4} \\
\midrule
$\lambda=0.1$ & \val{99.0}{1}           & \val{86.2}{8}          & \val{96.4}{2} \\
$\lambda=0.5$ & \textbf{\val{99.6}{1}}  & \textbf{\val{96.6}{2}} & \underline{\val{99.0}{2}} \\
$\lambda=1.0$ & \val{98.4}{1}           & \textbf{\val{96.6}{2}} & \textbf{\val{99.2}{0}} \\
$\lambda=5.0$ & \val{99.2}{0}           & \val{94.2}{4}          & \val{98.0}{2} \\


\bottomrule
\end{tabular}
\end{minipage}
\hspace{0.04\textwidth}
\begin{minipage}[t]{0.46\textwidth}
\centering
\caption{$K$ ablation. Success rate (\%) averaged over tasks per environment and 5 seeds. \textbf{Bold}/\underline{underline}: best/second-best.}
\label{tab:topk_ablation}
\begin{tabular}{@{}lccccc@{}}
\toprule
& \textbf{Robo.} & \textbf{Cube-2.} & \textbf{Cube-3.} & \textbf{Cube-4.} & \textbf{Avg.} \\
\midrule
$K=1$ & \underline{95.0} & \underline{99.7} & 54.5 & 94.2 & 85.8 \\
$K=2$ & \textbf{94.6} & \textbf{100.0} & \underline{76.2} & \underline{96.8} & \underline{91.9} \\
$K=4$ & 93.9 & \underline{99.7} & \textbf{90.4} & \textbf{98.4} & \textbf{95.6} \\
$K=8$ & 88.6 & 99.8 & 85.5 & 96.2 & 92.5 \\
\bottomrule
\end{tabular}
\end{minipage}%
\hfill
\end{table}
\paragraph{Effect of the top-$K$ loss weight $\lambda$.}
We perform an ablation study on the $\mathcal{L}_{\text{top-}K}$ weight $\lambda$. As reported in Tab.~\ref{tab:beta_ablation}, we vary $\lambda \in \{0.1, 0.5, 1.0, 5.0\}$ and observe that DFP is robust to the weight.
\paragraph{Effect of the number of positives $K$.}
With the candidate pool size fixed at $N = 16$, we vary $K \in \{1, 2, 4, 8\}$ in $\mathcal{L}_{\text{top-}K}$, reported in Tab.~\ref{tab:topk_ablation}. While a sweet spot exists, $\mathcal{L}_{\text{top-}K}$ is overall robust to $K$: even the worst setting ($K=1$, $85.8\%$) is already comparable to the best baseline (QC-FQL, $86.8\%$). 
\section{Conclusion}
\label{sec:conclusion}
We present \emph{Drifting Field Policy} ({DFP}), a novel one-step generative policy grounded in a Wasserstein-2 gradient flow interpretation of drifting model training. The resulting top-$K$ drift loss updates the policy toward high-reward actions within a trust region around the previous policy, with a tractable surrogate of the policy improvement objective. On Robomimic~\cite{robomimic2021} and OGBench~\cite{park2024:OGbench} benchmarks, DFP achieves $95.8\%$ average success rate, outperforming all baselines including multi-step variants. Ablations attribute the gain to both the drifting parameterization and the top-$K$ supervision, whose synergy is unique to the drifting backbone.

\paragraph{Limitations.}

DFP is currently studied on simulated continuous-action manipulation tasks; extensions to high-dimensional observations and sim-to-real deployment remain future work. Performance of the proposed top-$K$ loss depends on the quality of the learned critic, a limitation shared with actor-critic methods broadly. Finally, while we provide structural analyses and supporting ablations for the advantages of our non-ODE parameterization over ODE-based policies, a deeper theoretical understanding of these distinctions remains an open question.
\bibliographystyle{abbrv}
\bibliography{ref}
\clearpage
\newpage

\appendix
\section{Experimental Details}
\label{app:experimental-details}

\subsection{Environment Descriptions}
\label{app:environment-descriptions}

We evaluate on $12$ manipulation tasks drawn from the Robomimic benchmark~\citep{robomimic2021} and the OGBench manipulation suite~\citep{park2024:OGbench}. Robomimic uses a $7$-DoF Franka Emika Panda arm, while the OGBench cube environments use a $6$-DoF UR5e arm with a Robotiq $2$F-$85$ parallel-jaw gripper. All tasks employ sparse, completion-style rewards. Fig.~\ref{fig:env_visualization} visualizes all $12$ tasks.

\begin{figure}
\centering
\captionsetup[subfigure]{justification=centering, aboveskip=4pt, belowskip=4pt}

\subfloat[Lift\label{subFig:env_lift}]
{\includegraphics[width = 0.32\textwidth]{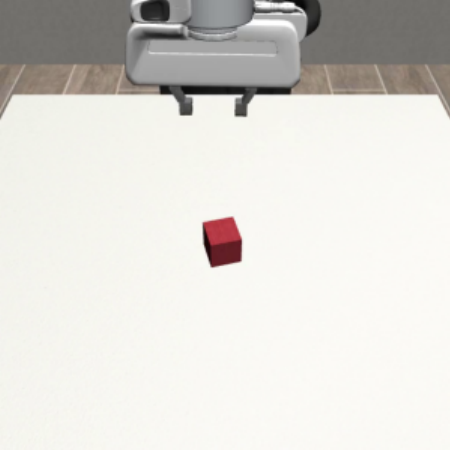}} 
\subfloat[Can\label{subFig:env_can}]
{\includegraphics[width = 0.32\textwidth]{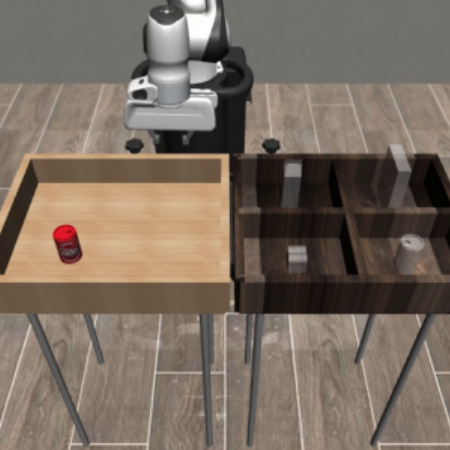}}
\subfloat[Square\label{subFig:env_square}]
{\includegraphics[width = 0.32\textwidth]{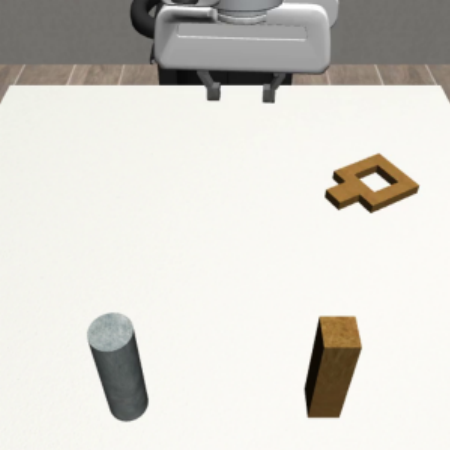}} \\[-0.5em]

\subfloat[Cube-double-task2\label{subFig:env_double-2}]
{\includegraphics[width = 0.32\textwidth]{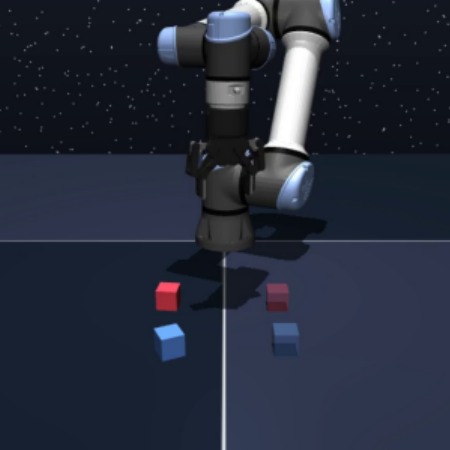}} 
\subfloat[Cube-double-task3\label{subFig:env_double-3}]
{\includegraphics[width = 0.32\textwidth]{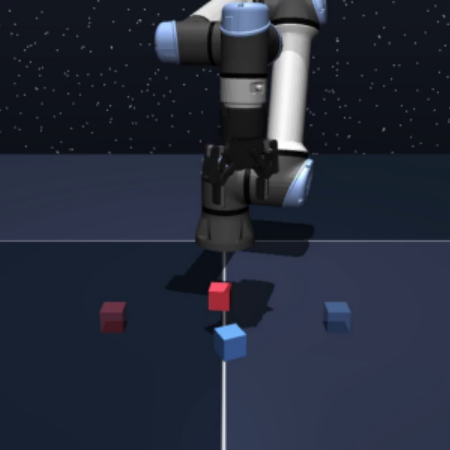}} 
\subfloat[Cube-double-task4\label{subFig:env_double-4}]
{\includegraphics[width = 0.32\textwidth]{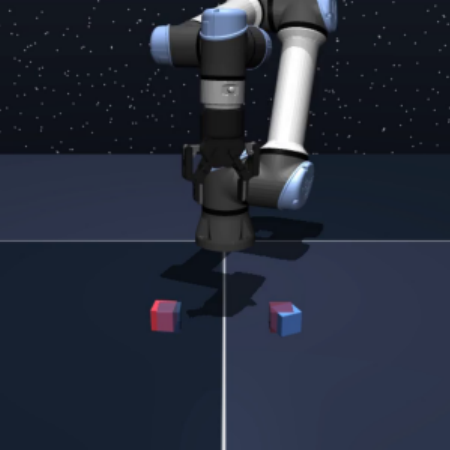}} \\[-0.5em]

\subfloat[Cube-triple-task2\label{subFig:env_triple-2}]
{\includegraphics[width = 0.32\textwidth]{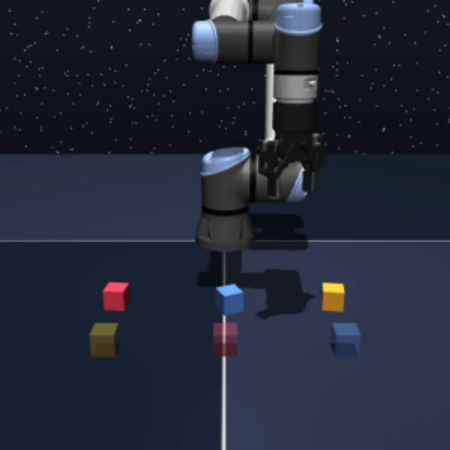}} 
\subfloat[Cube-triple-task3\label{subFig:env_triple-3}]
{\includegraphics[width = 0.32\textwidth]{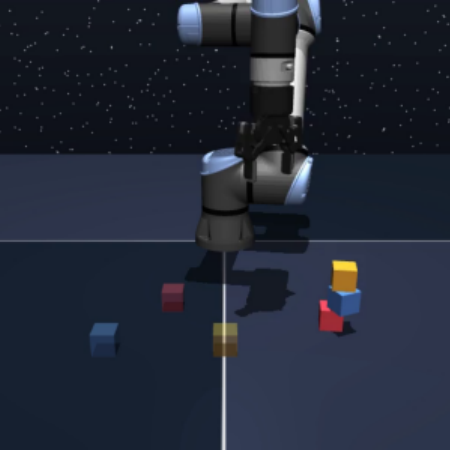}}
\subfloat[Cube-triple-task4\label{subFig:env_triple-4}]
{\includegraphics[width = 0.32\textwidth]{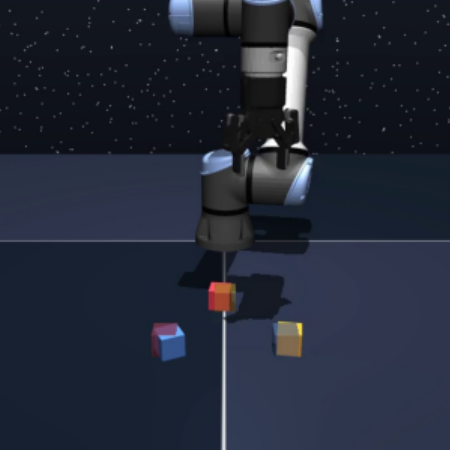}} \\[-0.5em]

\subfloat[Cube-quadruple-task2\label{subFig:env_quad-2}]
{\includegraphics[width = 0.32\textwidth]{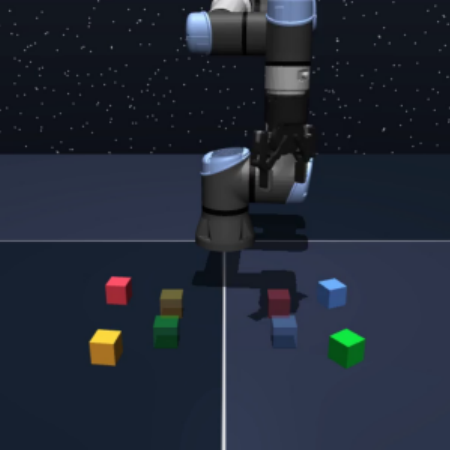}} 
\subfloat[Cube-quadruple-task3\label{subFig:env_quad-3}]
{\includegraphics[width = 0.32\textwidth]{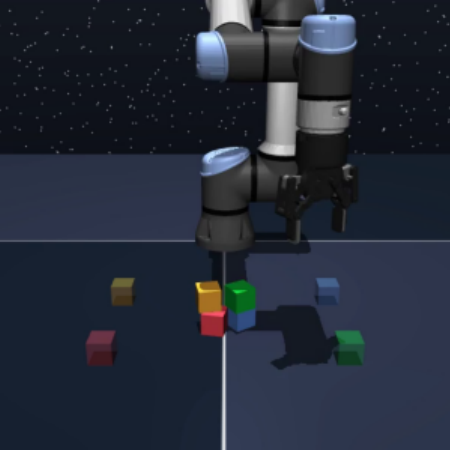}}
\subfloat[Cube-quadruple-task4\label{subFig:env_quad-4}]
{\includegraphics[width = 0.32\textwidth]{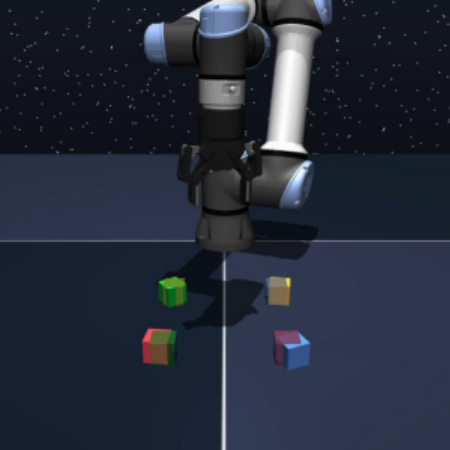}}

\caption{\textbf{Visualization of the 12 manipulation tasks used in our experiments.} The top row shows the Robomimic Multi-Human tasks (Lift, Can, Square), and the remaining rows show the OGBench Cube environments with $N \in \{2, 3, 4\}$ cubes. Each panel depicts the initial configuration of a representative episode.}
\label{fig:env_visualization}
\end{figure}

\subsubsection{Robomimic (Multi-Human)}
\label{app:robomimic_details}

\paragraph{Datasets.}
We use the Multi-Human (MH) variant of each Robomimic task, collected via teleoperation through the RoboTurk platform by six operators stratified by skill (two ``worse'', two ``okay'', and two ``better''). Each operator contributes $50$ successful trajectories, yielding $300$ heterogeneous demonstrations per task. The intentionally non-uniform action distribution makes MH a harder offline-learning setting than the cleaner Proficient-Human variants.

\paragraph{Tasks.}
Every task is solved by a single $7$-DoF Franka Emika Panda arm in a tabletop workspace and terminates upon success, with a sparse reward.
\begin{itemize}[leftmargin=1.2em]
    \item \textbf{Lift} (Fig.~\ref{fig:env_visualization}a) -- Grasp a randomly placed cube and raise it above a height threshold. Tests basic grasping.
    \item \textbf{Can} (Fig.~\ref{fig:env_visualization}b) -- Transport a coke can from one bin to a smaller target bin, requiring coordinated reach, grasp, and release.
    \item \textbf{Square} (Fig.~\ref{fig:env_visualization}c) -- Pick up a square nut and thread it onto a peg with sub-centimeter tolerance. The most precision-sensitive task in the suite.
\end{itemize}

\subsubsection{OGBench Cube Environments}
\label{app:ogbench_details}

\paragraph{Common setup.}
The \texttt{cube-double}, \texttt{cube-triple}, and \texttt{cube-quadruple} 
environments share an identical $6$-DoF UR5e arm with a Robotiq $2$F-$85$ 
gripper but vary the number of cubes $N \in \{2, 3, 4\}$. We use the 
\texttt{play-singletask-task[N]-v0} variants, which fix an evaluation goal 
and relabel the unstructured \texttt{play} dataset with the corresponding 
reward function for offline pre-training. Among the five canonical evaluation 
goals (\texttt{task1}--\texttt{task5}) provided by each environment, we focus 
on \texttt{task2}--\texttt{task4}, which together cover representative skills 
including multi-object pick-and-place, structural manipulation, and 
combinatorial rearrangement.
The reward is semi-sparse, defined as $r = -n_{\text{wrong}}$, where 
$n_{\text{wrong}}$ counts cubes whose position has not yet reached its 
target within the environment's success tolerance; an episode terminates 
only when all $N$ cubes simultaneously satisfy the goal criterion. 
Following OGBench's convention, success is determined by cube positions 
only, ignoring orientation.

\paragraph{cube-double} (Fig.~\ref{fig:env_visualization}d--f) covers three 
multi-object skills corresponding to the OGBench-defined goals 
\emph{double-pnp1} (\texttt{task2}), \emph{double-pnp2} (\texttt{task3}), 
and \emph{swap} (\texttt{task4}), where the last requires exchanging the 
positions of two occupied cubes.

\paragraph{cube-triple} (Fig.~\ref{fig:env_visualization}g--i) requires 
composing pick-and-place primitives in non-trivial sequences: 
\emph{triple-pnp} rearrangement of three cubes (\texttt{task2}), 
\emph{pnp-from-stack} which involves manipulating cubes from a stacked 
configuration (\texttt{task3}), and \emph{cyclic permutation} of three 
cubes (\texttt{task4}).

\paragraph{cube-quadruple} (Fig.~\ref{fig:env_visualization}j--l) amplifies 
long-horizon coordination, requiring up to four sequential pick-and-place 
subtasks per goal (Table 1 of \cite{park2024:OGbench}): 
\emph{quadruple-pnp} rearrangement of four cubes (\texttt{task2}), 
\emph{pnp-from-square} which manipulates cubes from a square configuration 
(\texttt{task3}), and a \emph{4-cycle} permutation (\texttt{task4}) that 
cannot be decomposed into fewer than three pairwise swaps.

The progression from \texttt{cube-double} to \texttt{cube-quadruple} is 
intentionally combinatorial: the same primitives (pick-and-place, swap, 
cycle) recur, but the number of required subtasks scales with $N$ 
(up to 1, 2, 4 atomic behaviors for $N=2, 3, 4$ respectively, per OGBench 
Table 1), with the longest evaluation task requiring approximately 400 
environment steps. This provides a controlled benchmark for long-horizon 
sequential reasoning and credit assignment.

\subsection{Baseline Method Details}
\label{app:baseline-method-details}

We compare DFP against five recent generative model based policies for offline-to-online RL. All baselines share the off-policy actor-critic skeleton of Eqs.~\eqref{eq:critic_td}-\eqref{eq:actor_loss} but differ in the policy parameterization and how the actor is supervised. We summarize each below; full hyperparameters are listed in Appendix~\ref{app:hyperparameters}.

\paragraph{FQL~\citep{Park2025:FQL}.}
Flow Q-Learning trains a multi-step flow-matching behavior policy $\mu_\theta(s, z)$ with the standard flow-matching objective and jointly distills it into a one-step student $\mu_\omega(s, z)$ that is optimized to maximize $Q_\phi$ under a distillation regularizer to $\mu_\theta$. The one-step student is used at inference, eliminating iterative integration.

\paragraph{BFN~\citep{ghasemipour2021:EMAQ}.}
Best-of-$N$ is a critic-guided action-selection wrapper around a behavior-cloned multi-step flow policy: at every action call, $N$ candidates are drawn from the BC policy and the one with the highest $Q_\phi(s, a^{(j)})$ is executed. We follow the formulation of EMaQ~\citep{ghasemipour2021:EMAQ}, which derives this scheme from the expected-max Bellman backup and uses the BC policy as the proposal. BFN exploits the critic only at execution time; the policy itself is never updated by $Q_\phi$.

\paragraph{QC-BFN~\citep{Li2025:QC}.}
Q-chunking~\cite{Li2025:QC} applied to BFN. The actor predicts a temporally extended sequence of $H$ future actions rather than a single action, and RL is run directly in the chunked action space with an unbiased $H$-step Bellman backup that mitigates exploration in long-horizon sparse-reward tasks. The BFN best-of-$N$ wrapper is then applied on chunks at execution.

\paragraph{QC-FQL~\citep{Li2025:QC}.}
Q-chunking~\cite{Li2025:QC} applied to FQL. The chunked $h$-step Bellman backup of QC-BFN is retained, but the actor is the one-step FQL student rather than the multi-step BC flow, paying only one-step inference cost while inheriting the chunked exploration benefit.

\paragraph{MVP~\citep{Zhan2026:MVP}.}
Mean Velocity Policy parameterizes the actor as a MeanFlow~\citep{geng2025:mean} velocity field $v_\theta(a^{(t)}, t, r, s)$ that models the \emph{mean} velocity over an interval $[r, t]$, enabling a single Euler step from noise to action at inference. MVP introduces an instantaneous velocity constraint (IVC) at the interval boundary as an auxiliary regression loss to disambiguate the otherwise under-determined ODE. Online supervision relies on best-of-$N$ environment rollouts: $N$ actions are sampled from the current actor at every state, the highest-$Q$ candidate is executed and stored, and the velocity field is regressed onto these critic-selected actions via standard MeanFlow self-consistency. MVP is the most direct empirical comparison for DFP since it shares the one-step regime and the best-of-$N$ online recipe but uses an ODE-based backbone and single-positive supervision.

\paragraph{MVP w/ $\mathcal{L}_{\text{top-}K}$.}
A controlled ablation we introduce to isolate the effect of top-$K$ supervision from the choice of backbone. The actor parameterization, IVC loss, and best-of-$N$ rollout are inherited from MVP; the only modification is to extend the online actor supervision from a single critic-selected target to the same top-$K$ multi-positive set $\mathcal{P}_K(s)$ used in DFP, regressing the MeanFlow velocity field onto each of the $K$ positives. This isolates the question: does top-$K$ supervision transfer to an ODE-based one-step backbone, or does the gain require the drifting parameterization? Sec.~\ref{sec:results} reports that the gain is largely backbone-specific, supporting the latter.

\subsection{Hyperparameters}
\label{app:hyperparameters}
Tab.~\ref{tab:hyperparameters} lists the hyperparameters used by DFP. The shared block (top) covers the actor-critic backbone, and the DFP-specific block (bottom) lists the additional hyperparameters introduced for the drifting field and the top-$K$ surrogate loss. For all baselines, we follow the configurations reported in their original papers~\citep{Park2025:FQL, ghasemipour2021:EMAQ, Li2025:QC, Zhan2026:MVP}.
\begin{table}[h]
\centering
\caption{\textbf{Detailed hyperparameters.}}
\label{tab:hyperparameters}
\begin{tabular}{lc}
\toprule
\textbf{Parameter} & \textbf{Value} \\
\midrule
\multicolumn{2}{l}{\textbf{\textit{Shared}}} \\
\addlinespace[2pt]
Batch size                                    & 256 \\
Discount factor ($\gamma$)                    & 0.99 \\
Optimizer                                     & Adam \\
Learning rate                                 & $3 \times 10^{-4}$ \\
Target network update rate ($\tau$)           & $5 \times 10^{-3}$ \\
Number of offline training steps               & $1 \times 10^{6}$ (1M) \\
Number of online training steps               & $1 \times 10^{6}$ (1M) \\
Total gradient steps                          & $2 \times 10^{6}$ (2M) \\
Policy network width                          & 512 \\
Policy network depth                          & 4 hidden layers \\
Policy activation function                    & GELU \\
Policy layer normalization                    & False \\
Value network width                           & 512 \\
Value network depth                           & 4 hidden layers \\
Value activation function                     & GELU \\
Value layer normalization                     & True \\
Value ensemble size                            & 2 \\
Value ensemble operator                       & MEAN \\
Chunking horizon                              & 5 (1 for\textit{ FQL, BFN}) \\
\midrule
\multicolumn{2}{l}{\textcolor{our_color}{\textbf{\textit{DFP (Ours)}}}} \\
\addlinespace[2pt]
Number of candidates for top-$K$ ($N$)                          & 16 \\
Number of generated actions ($N_{\text{gen}}$)                          & 8 \\
Drift weight ($\lambda$)                        & 0.5 \\
Actor EMA smoothing ($\tau_{\mathrm{EMA}}$)   & $1 \times 10^{-4}$ \\
Number of best-of-$N'$ ($N'$)                     & 16$^*$ / 4$^{\dagger}$ \\
Top-$K$ positive set size ($K$)                    & 4$^*$ / 2$^{\dagger}$ \\
Kernel bandwidth ($h$)                     & $\{0.05\}^*$ / $\{0.005, 0.05\}$$^{\dagger}$ \\
\bottomrule
\end{tabular}

\vspace{4pt}
{\small  $^*$ OGBench (\texttt{cube}) / $^{\dagger}$ Robomimic (\texttt{lift}, \texttt{can}, \texttt{square}).}
\end{table}
\section{Additional Results}

\subsection{Full Offline-to-Online Results}
\label{app:full-offline-to-online}

\definecolor{kaistblue}{RGB}{0,65,145}
\definecolor{kaistlight}{RGB}{210,225,245}
\definecolor{onlinecol}{RGB}{0,110,170}

\definecolor{mutedgray}{gray}{0.45}
\definecolor{mutedgrayLite}{gray}{0.7}
\definecolor{stripegray}{gray}{0.96}

\definecolor{darkemph}{gray}{0.20}

\definecolor{ourscol}{RGB}{225,235,250}
\definecolor{ourscolstrong}{RGB}{200,220,245}

\newcommand{\stdsup}[2]{%
  \textsuperscript{\tiny\textcolor{#1}{$\pm$#2}}%
}

\newcommand{\stdsub}[2]{%
  \textsubscript{\tiny\textcolor{#1}{$\pm$#2}}%
}

\newcommand{\trans}[4]{%
  \textcolor{mutedgray}{#1}\stdsub{mutedgrayLite}{#2}%
  {\scriptsize$\rightarrow$}%
  \textcolor{mutedgray}{#3}\stdsub{mutedgrayLite}{#4}%
}

\newcommand{\transBB}[4]{
  \textcolor{darkemph}{\textbf{#1}}\stdsub{mutedgrayLite}{#2}%
  {\scriptsize$\rightarrow$}%
  \textcolor{darkemph}{\textbf{#3}}\stdsub{mutedgrayLite}{#4}%
}
\newcommand{\transBS}[4]{%
  \textcolor{darkemph}{\textbf{#1}}\stdsub{mutedgrayLite}{#2}%
  {\scriptsize$\rightarrow$}%
  \textcolor{darkemph}{\underline{#3}}\stdsub{mutedgrayLite}{#4}%
}
\newcommand{\transBN}[4]{%
  \textcolor{darkemph}{\textbf{#1}}\stdsub{mutedgrayLite}{#2}%
  {\scriptsize$\rightarrow$}%
  \textcolor{mutedgray}{#3}\stdsub{mutedgrayLite}{#4}%
}
\newcommand{\transSB}[4]{
  \textcolor{darkemph}{\underline{#1}}\stdsub{mutedgrayLite}{#2}%
  {\scriptsize$\rightarrow$}%
  \textcolor{darkemph}{\textbf{#3}}\stdsub{mutedgrayLite}{#4}%
}
\newcommand{\transSS}[4]{%
  \textcolor{darkemph}{\underline{#1}}\stdsub{mutedgrayLite}{#2}%
  {\scriptsize$\rightarrow$}%
  \textcolor{darkemph}{\underline{#3}}\stdsub{mutedgrayLite}{#4}%
}
\newcommand{\transSN}[4]{%
  \textcolor{darkemph}{\underline{#1}}\stdsub{mutedgrayLite}{#2}%
  {\scriptsize$\rightarrow$}%
  \textcolor{mutedgray}{#3}\stdsub{mutedgrayLite}{#4}%
}
\newcommand{\transNB}[4]{%
  \textcolor{mutedgray}{#1}\stdsub{mutedgrayLite}{#2}%
  {\scriptsize$\rightarrow$}%
  \textcolor{darkemph}{\textbf{#3}}\stdsub{mutedgrayLite}{#4}%
}
\newcommand{\transNS}[4]{%
  \textcolor{mutedgray}{#1}\stdsub{mutedgrayLite}{#2}%
  {\scriptsize$\rightarrow$}%
  \textcolor{darkemph}{\underline{#3}}\stdsub{mutedgrayLite}{#4}%
}

\newcommand{\transOBB}[4]{%
  \textcolor{kaistblue}{\textbf{#1}}\stdsub{kaistblue}{#2}%
  {\scriptsize$\rightarrow$}%
  \textcolor{kaistblue}{\textbf{#3}}\stdsub{kaistblue}{#4}%
}
\newcommand{\transOBS}[4]{%
  \textcolor{kaistblue}{\textbf{#1}}\stdsub{kaistblue}{#2}%
  {\scriptsize$\rightarrow$}%
  \textcolor{kaistblue}{\underline{#3}}\stdsub{kaistblue}{#4}%
}
\newcommand{\transOBN}[4]{%
  \textcolor{kaistblue}{\textbf{#1}}\stdsub{kaistblue}{#2}%
  {\scriptsize$\rightarrow$}%
  \textcolor{mutedgray}{#3}\stdsub{mutedgrayLite}{#4}%
}
\newcommand{\transOSB}[4]{%
  \textcolor{kaistblue}{\underline{#1}}\stdsub{kaistblue}{#2}%
  {\scriptsize$\rightarrow$}%
  \textcolor{kaistblue}{\textbf{#3}}\stdsub{kaistblue}{#4}%
}
\newcommand{\transOSS}[4]{%
  \textcolor{kaistblue}{\underline{#1}}\stdsub{kaistblue}{#2}%
  {\scriptsize$\rightarrow$}%
  \textcolor{kaistblue}{\underline{#3}}\stdsub{kaistblue}{#4}%
}
\newcommand{\transOSN}[4]{%
  \textcolor{kaistblue}{\underline{#1}}\stdsub{kaistblue}{#2}%
  {\scriptsize$\rightarrow$}%
  \textcolor{mutedgray}{#3}\stdsub{mutedgrayLite}{#4}%
}
\newcommand{\transONB}[4]{%
  \textcolor{mutedgray}{#1}\stdsub{mutedgrayLite}{#2}%
  {\scriptsize$\rightarrow$}%
  \textcolor{kaistblue}{\textbf{#3}}\stdsub{kaistblue}{#4}%
}
\newcommand{\transONS}[4]{%
  \textcolor{mutedgray}{#1}\stdsub{mutedgrayLite}{#2}%
  {\scriptsize$\rightarrow$}%
  \textcolor{kaistblue}{\underline{#3}}\stdsub{kaistblue}{#4}%
}
\newcommand{\transONN}[4]{%
  \textcolor{mutedgray}{#1}\stdsub{mutedgrayLite}{#2}%
  {\scriptsize$\rightarrow$}%
  \textcolor{mutedgray}{#3}\stdsub{mutedgrayLite}{#4}%
}


\begin{table*}[t]
\centering
\caption{
\textbf{Offline-to-online RL full results.}
Each cell shows offline $\rightarrow$ online (mean with std as subscript). 
Best result per column in \textbf{bold}; second-best \underline{underlined}.}
\label{tab:offline-to-online}

\setlength{\tabcolsep}{4pt}
\renewcommand{\arraystretch}{1.3}
\footnotesize

\resizebox{\textwidth}{!}{%
\begin{tabular}{@{}ll|cccccc|>{\columncolor{ourscol}}c|>{\columncolor{ourscol}}c@{}}
\toprule

\multicolumn{2}{c|}{} 
& \multicolumn{6}{c|}{\textbf{Baselines}} 
& \multicolumn{2}{c}{\cellcolor{ourscol}{\textcolor{our_color}{\textbf{Ours}}}} \\

\cmidrule(lr){3-8} \cmidrule(lr){9-10}

\textbf{Benchmark} & \textbf{Task}
& BFN~\cite{ghasemipour2021:EMAQ} & QC-BFN~\cite{Li2025:QC} & FQL~\cite{Park2025:FQL} & QC-FQL~\cite{Li2025:QC} & MVP~\cite{Zhan2026:MVP} & MVP w/ $\mathcal{L}_{\text{top-}K}$
& \textbf{DFP w/o $\mathcal{L}_{\text{top-}K}$} & \textcolor{our_color}{\textbf{DFP (Ours)}} \\

\midrule

\multirow{3}{*}{\textbf{Robomimic}}
& lift
& \trans{90.4}{7.1}{97.6}{2.2}
& \transSN{95.2}{3.4}{99.6}{0.5}
& \trans{84.0}{5.7}{96.8}{2.3}
& \transBB{95.8}{3.3}{100.0}{0.0}
& \trans{61.4}{5.7}{99.8}{0.4}
& \trans{61.4}{5.7}{100.0}{0.0}
& \transONB{85.2}{3.4}{100.0}{0.0}
& \transONB{85.2}{3.4}{100.0}{0.0} \\

& square
& \trans{16.8}{5.2}{32.8}{7.6}
& \transSN{40.0}{2.0}{88.4}{3.6}
& \trans{3.6}{3.6}{10.8}{6.7}
& \trans{35.4}{6.7}{72.0}{8.7}
& \trans{4.6}{2.5}{79.4}{3.9}
& \trans{4.6}{2.5}{81.6}{5.0}
& \transOBS{77.6}{6.9}{88.6}{1.5}
& \transOBB{77.6}{6.9}{93.2}{1.6} \\

& can
& \trans{59.6}{12.8}{82.0}{2.4}
& \transSN{83.0}{2.7}{90.6}{3.2}
& \trans{31.2}{4.1}{58.4}{7.5}
& \transBB{88.0}{3.3}{94.4}{1.8}
& \trans{46.0}{7.1}{83.6}{5.2}
& \trans{46.0}{7.1}{86.2}{5.6}
& \transONN{25.8}{4.1}{90.4}{4.5}
& \transONS{25.8}{4.1}{90.6}{3.2} \\

\midrule

\multirow{3}{*}{\textbf{Cube-double}}
& task2
& \transSN{75.6}{7.4}{86.0}{4.7}
& \transBS{77.6}{4.7}{99.8}{0.4}
& \trans{27.2}{10.4}{93.2}{7.8}
& \transNB{39.6}{9.7}{100.0}{0.0}
& \trans{34.6}{9.8}{98.4}{1.3}
& \trans{34.6}{9.8}{99.6}{0.5}
& \transONN{53.2}{5.8}{99.2}{0.8}
& \transONB{53.2}{5.8}{100.0}{0.0} \\

& task3
& \transBN{79.6}{6.8}{88.8}{5.2}
& \transSB{76.0}{4.7}{99.8}{0.4}
& \trans{26.8}{9.3}{91.2}{4.8}
& \transNB{40.2}{6.2}{99.8}{0.4}
& \trans{37.8}{10.8}{98.6}{1.1}
& \trans{37.8}{10.8}{98.8}{0.8}
& \transONS{58.8}{5.0}{99.6}{0.9}
& \transONS{58.8}{5.0}{99.6}{0.5} \\

& task4
& \transSN{18.4}{5.0}{27.2}{8.2}
& \transBN{23.2}{5.8}{92.6}{5.7}
& \trans{4.0}{3.2}{6.0}{6.3}
& \transNB{9.4}{3.6}{99.8}{0.4}
& \trans{15.0}{7.0}{94.8}{4.3}
& \trans{15.0}{7.0}{96.6}{0.5}
& \transONN{8.8}{2.6}{96.0}{2.8}
& \transONS{8.8}{2.6}{99.6}{0.5} \\

\midrule

\multirow{3}{*}{\textbf{Cube-triple}}
& task2
& \trans{0.4}{0.9}{7.6}{8.5}
& \transSN{0.6}{0.9}{87.4}{9.8}
& \trans{0.4}{0.9}{0.4}{0.9}
& \trans{0.0}{0.0}{88.2}{2.2}
& \trans{0.4}{0.5}{86.2}{4.4}
& \trans{0.4}{0.5}{78.0}{15.4}
& \transOBS{3.2}{2.5}{91.4}{3.4}
& \transOBB{3.2}{2.5}{98.4}{1.1} \\

& task3
& \trans{2.0}{2.0}{6.8}{3.0}
& \trans{0.8}{0.8}{80.8}{3.8}
& \trans{0.4}{0.9}{6.4}{8.2}
& \trans{0.0}{0.0}{60.4}{12.3}
& \transSN{4.2}{2.9}{57.2}{10.5}
& \transSN{4.2}{2.9}{60.6}{12.6}
& \transOBS{7.6}{3.5}{83.2}{4.1}
& \transOBB{7.6}{3.5}{91.6}{1.8} \\

& task4
& \trans{0.0}{0.0}{0.0}{0.0}
& \trans{0.0}{0.0}{33.4}{9.4}
& \trans{0.0}{0.0}{0.0}{0.0}
& \transNS{0.0}{0.0}{51.4}{24.2}
& \transBN{1.2}{0.8}{31.0}{20.3}
& \transBN{1.2}{0.8}{30.4}{10.9}
& \transOSN{1.0}{0.7}{31.2}{6.5}
& \transOSB{1.0}{0.7}{81.2}{5.6} \\

\midrule

\multirow{3}{*}{\textbf{Cube-quad}}
& task2
& \trans{0.0}{0.0}{32.4}{21.0}
& \trans{0.0}{0.0}{95.8}{2.3}
& \trans{0.0}{0.0}{0.0}{0.0}
& \trans{0.0}{0.0}{98.0}{2.0}
& \trans{0.0}{0.0}{96.6}{1.5}
& \trans{0.0}{0.0}{97.2}{1.5}
& \transOBS{0.2}{0.4}{97.6}{1.3}
& \transOBB{0.2}{0.4}{99.6}{0.9} \\

& task3
& \trans{0.0}{0.0}{0.0}{0.0}
& \transSN{1.6}{1.8}{63.2}{9.7}
& \trans{0.0}{0.0}{0.0}{0.0}
& \trans{0.0}{0.0}{85.0}{6.9}
& \trans{1.2}{2.2}{47.2}{29.8}
& \trans{1.2}{2.2}{69.2}{14.8}
& \transOBS{5.6}{2.5}{88.8}{3.2}
& \transOBB{5.6}{2.5}{96.6}{1.9} \\

& task4
& \trans{0.0}{0.0}{0.0}{0.0}
& \transBN{0.4}{0.5}{74.2}{10.9}
& \trans{0.0}{0.0}{0.0}{0.0}
& \trans{0.0}{0.0}{92.2}{7.0}
& \trans{0.0}{0.0}{91.2}{2.4}
& \trans{0.0}{0.0}{94.2}{4.1}
& \transOSS{0.2}{0.4}{95.2}{3.2}
& \transOSB{0.2}{0.4}{99.0}{1.7} \\

\midrule

\multicolumn{2}{c|}{\textbf{Average}}
& \textcolor{mutedgray}{28.6}{\scriptsize$\rightarrow$}\textcolor{mutedgray}{38.4}
& \textcolor{darkemph}{\textbf{33.2}}{\scriptsize$\rightarrow$}\textcolor{mutedgray}{83.8}
& \textcolor{mutedgray}{14.8}{\scriptsize$\rightarrow$}\textcolor{mutedgray}{30.3}
& \textcolor{mutedgray}{25.7}{\scriptsize$\rightarrow$}\textcolor{mutedgray}{86.8}
& \textcolor{mutedgray}{17.2}{\scriptsize$\rightarrow$}\textcolor{mutedgray}{80.3}
& \textcolor{mutedgray}{17.2}{\scriptsize$\rightarrow$}\textcolor{mutedgray}{82.7}
& \textcolor{kaistblue}{\underline{27.3}}{\scriptsize$\rightarrow$}\textcolor{kaistblue}{\underline{88.4}}
& \textcolor{kaistblue}{\underline{27.3}}{\scriptsize$\rightarrow$}\textcolor{kaistblue}{\textbf{95.8}} \\
\bottomrule
\end{tabular}
}
\end{table*}
Tab.~\ref{tab:offline-to-online} reports the full offline$\rightarrow$online progression of all methods, complementing the online-only summary in Tab.~\ref{tab:main_results}. The expanded format makes the online-phase contribution of each method explicit, isolating how effectively each parameterization absorbs buffer updates during fine-tuning. Consistent with the analysis in Sec.~\ref{sec:results}, \textbf{DFP} and \textbf{DFP w/o $\mathcal{L}_{\text{top-}K}$} show the largest online gains on the long-horizon cube-triple and cube-quadruple splits, where the drifting backbone's output-level supervision and the top-$K$ drift loss provide the most leverage.

\subsection{Adaptation to Offline RL}
\label{app:offline_rl}


While our main experiments in Sec.~\ref{sec:results} follow the offline-to-online RL setup of MVP~\cite{Zhan2026:MVP}, DFP's algorithm is not tied to this setting. To probe the generality of our method, we evaluate DFP in a pure offline RL setting, where the algorithm is identical to Algorithm~\ref{alg:dfp} but without the online environment interaction. The combined loss  $\mathcal{L}_{\mathrm{BC}}(\theta) + \lambda\, \mathcal{L}_{\text{top-}K}(\theta)$ is applied throughout training.


\paragraph{Offline RL Benchmarks.}
Following prior offline RL work~\cite{Park2025:FQL}, we evaluate on the $5$ robot-manipulation environments from OGBench~\cite{park2024:OGbench}, and compare against the baselines reported therein. For each environment, we run the default tasks and report success rates averaged over $8$ seeds. Baseline numbers are taken from prior work~\cite{Park2025:FQL}.

\paragraph{Offline RL Baselines.} 
We compare against three categories of baselines: (i) \textit{Gaussian policy}: Behavior Cloning (BC), Implicit Q-Learning (IQL)~\cite{kostrikov2022:IQL}, and ReBRAC~\cite{tarasov2023:ReBRAC}; (ii) \textit{diffusion-based policy}: Implicit Diffusion Q-Learning (IDQL)~\cite{hansen2023:IDQL}, SRPO~\cite{chen2024:SRPO}, and CAC~\cite{Ding2024:consistencypolicy}; (iii) \textit{flow-based policy}: FAWAC and FBRAC, the flow-based variants of AWAC~\cite{Nair2020:AWAC} and BRAC~\cite{Wu2019:BRAC} respectively, and Flow Q-Learning with its iterative form (FQL, IFQL)~\cite{Park2025:FQL}.


\paragraph{Offline RL Results.}
Although our primary target is offline-to-online fine-tuning, DFP can also be trained from scratch in the pure offline RL setting by jointly optimizing behavior cloning and the top-$K$ Q-maximization objective. As Tab.~\ref{tab:offline_default} shows, DFP attains the best success rate on \texttt{cube-double-task2} and \texttt{scene-task2} and remains close to the strongest baseline on \texttt{cube-single-task2} and \texttt{puzzle-4x4-task4}, indicating effectiveness in this setting as well.

\begin{table}[h]
\centering
\caption{\textbf{Offline RL results on default OGBench tasks.} Mean with std as subscript, over 8 seeds. Best result per task in \textbf{bold}; second-best \underline{underlined}.}
\label{tab:offline_default}
\setlength{\tabcolsep}{4pt}
\renewcommand{\arraystretch}{1.15}
\scriptsize
\resizebox{\textwidth}{!}{%
\begin{tabular}{l|ccc|ccc|cccc|c}
\toprule
& \multicolumn{3}{c|}{\textbf{Gaussian Policies}} 
& \multicolumn{3}{c|}{\textbf{Diffusion Policies}} 
& \multicolumn{4}{c|}{\textbf{Flow Policies}} 
& \textcolor{our_color}{\textbf{Ours}} \\
\cmidrule(lr){2-4} \cmidrule(lr){5-7} \cmidrule(lr){8-11} \cmidrule(lr){12-12}
\textbf{Task}
& BC & IQL & ReBRAC
& IDQL & SRPO & CAC
& FAWAC & FBRAC & IFQL & FQL
& \textcolor{our_color}{\textbf{DFP}} \\
\midrule
cube-single-task2 
& \val{3}{1} & \val{85}{8} & \val{92}{4}
& \underline{\val{96}{2}} & \val{82}{16} & \val{80}{30}
& \val{81}{9} & \val{83}{13} & \val{73}{3}
& \textbf{\val{97}{2}}
& \val{95}{3} \\
cube-double-task2 
& \val{0}{0} & \val{1}{1} & \val{7}{3}
& \val{16}{10} & \val{0}{0} & \val{2}{2}
& \val{2}{1} & \val{22}{12} & \val{9}{5}
& \underline{\val{36}{6}}
& \textbf{\val{41}{4}} \\
scene-task2 
& \val{1}{1} & \val{12}{3} & \val{50}{13}
& \val{33}{14} & \val{2}{2} & \val{50}{40}
& \val{18}{8} & \val{46}{10} & \val{0}{0}
& \underline{\val{76}{9}}
& \textbf{\val{93}{4}} \\
puzzle-3x3-task4 
& \val{1}{1} & \val{2}{1} & \val{2}{1}
& \val{0}{0} & \val{0}{0} & \val{0}{0}
& \val{1}{1} & \val{2}{2} & \val{0}{0}
& \textbf{\val{16}{5}}
& \underline{\val{3}{2}} \\
puzzle-4x4-task4 
& \val{0}{0} & \val{4}{1} & \val{10}{3}
& \textbf{\val{26}{6}} & \val{7}{4} & \val{1}{1}
& \val{0}{0} & \val{5}{1} & \underline{\val{21}{11}}
& \val{11}{3}
& \val{20}{2} \\
\bottomrule
\end{tabular}
}
\end{table}

\paragraph{Hyperparameters.}
We list the offline-specific hyperparameters in Tab.~\ref{tab:offline_hyperparams}. Unlike the offline-to-online setting, we disable both action chunking and best-of-$N'$ execution for offline RL. All remaining hyperparameters follow the shared block of Tab.~\ref{tab:hyperparameters}
\begin{table}[h]
\centering
\caption{\textbf{Task-specific hyperparameters for DFP offline training.}}
\label{tab:offline_hyperparams}
\setlength{\tabcolsep}{6pt}
\renewcommand{\arraystretch}{1.15}
\scriptsize
\begin{tabular}{l|ccccc}
\toprule
\textbf{Hyperparameter}
& \makecell{cube-single\\task2}
& \makecell{cube-double\\task2}
& \makecell{scene\\task2}
& \makecell{puzzle-3x3\\task4}
& \makecell{puzzle-4x4\\task4} \\
\midrule
Drift weight $\lambda$                        &  0.45 & 0.55 & 0.5 & 0.5 & 0.5  \\
Kernel bandwidth ($h$)                            & $\{0.05\}$ &  $\{0.05\}$ & $\{0.01, 0.05\}$ & $\{0.01, 0.05\}$ & $\{0.01, 0.05\}$  \\
Num. candidates for top-$K$ ($N$)           & 16 & 16 & 16 & 16 & 32  \\
Top-$K$ positive set size ($K$)       & 8 & 8 & 8 & 8 & 16  \\
\bottomrule
\end{tabular}
\end{table}

\subsection{Asymmetric Effect of \texorpdfstring{$\mathcal{L}_{\text{top-}K}$}{L\_K} Across Backbones}
\label{app:lk-asymmetry}

Figure~\ref{fig:lk_asymmetry} shows the training curves of the backbone $\times$ loss ablation. Adding $\mathcal{L}_{\text{top-}K}$ to the MeanFlow backbone (\textbf{MVP} $\rightarrow$ \textbf{MVP w/ $\mathcal{L}_{\text{top-}K}$}) yields only a marginal gain that often falls within the seed variance, while the same supervision on the drifting backbone (\textbf{DFP w/o $\mathcal{L}_{\text{top-}K}$} $\rightarrow$ \textbf{DFP}) produces a substantially larger lift, most pronounced on the long-horizon \texttt{cube-triple} and \texttt{cube-quadruple} splits.

\begin{figure}[t]
\centering
\captionsetup[subfigure]{justification=centering, aboveskip=1pt, belowskip=1pt, labelformat=empty}
\newcommand{\rotlabelb}[2][1.5]{\raisebox{#1\height}{\rotatebox[origin=c]{90}{\scriptsize #2}}}
\rotlabelb[1.7]{Cube-triple-task2}\hspace{1pt}%
\subfloat[\label{subFig:lkabl_triple-2}]
{\includegraphics[width=0.31\textwidth,trim={0.7cm 0.2cm 0.0cm 0.2cm}, clip]{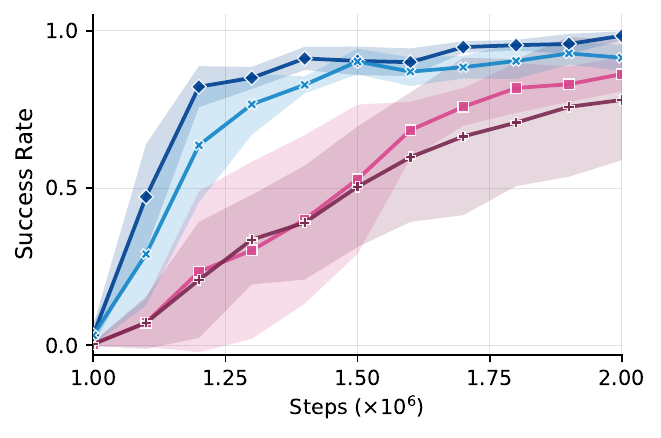}}\hspace{2pt}%
\rotlabelb[1.7]{Cube-triple-task3}\hspace{1pt}%
\subfloat[\label{subFig:lkabl_triple-3}]
{\includegraphics[width=0.31\textwidth,trim={0.7cm 0.2cm 0.0cm 0.2cm}, clip]{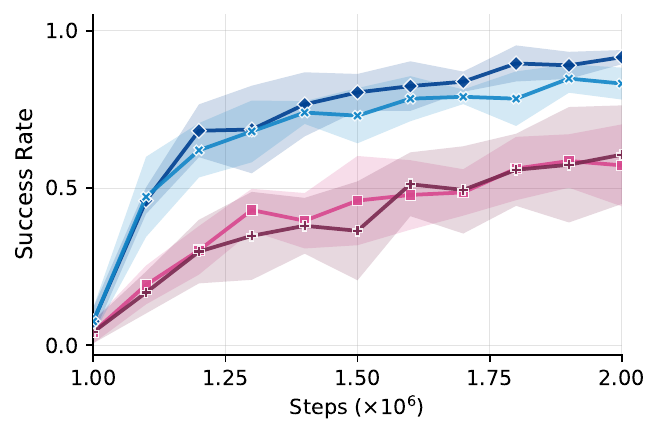}}\hspace{2pt}%
\rotlabelb[1.7]{Cube-triple-task4}\hspace{1pt}%
\subfloat[\label{subFig:lkabl_triple-4}]
{\includegraphics[width=0.31\textwidth,trim={0.7cm 0.2cm 0.0cm 0.2cm}, clip]{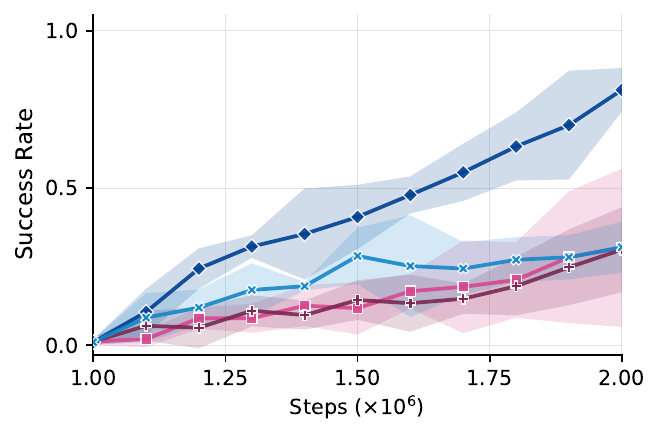}} \\[-1.5em]
{\includegraphics[width=0.6\textwidth,trim={0.0cm 0.2cm 0.0cm 0.2cm}, clip]{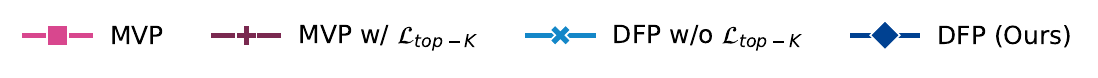}}
\vspace{-0.1cm}
\caption{\textbf{Online training curves for the backbone $\times$ loss ablation.} Success rate over the online phase, comparing MVP~\cite{Zhan2026:MVP}, MVP w/ $\mathcal{L}_{\text{top-}K}$, DFP w/o $\mathcal{L}_{\text{top-}K}$, and DFP.}
\label{fig:lk_asymmetry}
\end{figure}

\subsection{Top-\texorpdfstring{$K$}{K} Positive Set Size}
\label{app:k-ablation-curves}

Fig.~\ref{fig:k_ablation} shows the training curves for $K \in \{1, 2, 4, 8\}$ in $\mathcal{L}_{\text{top-}K}$, with all other components fixed.

\begin{figure}[t]
\centering
\captionsetup[subfigure]{justification=centering, aboveskip=4pt, belowskip=4pt}
\subfloat[Robomimic-lift\label{subFig:kabl_lift}]
{\includegraphics[width = 0.32\textwidth]{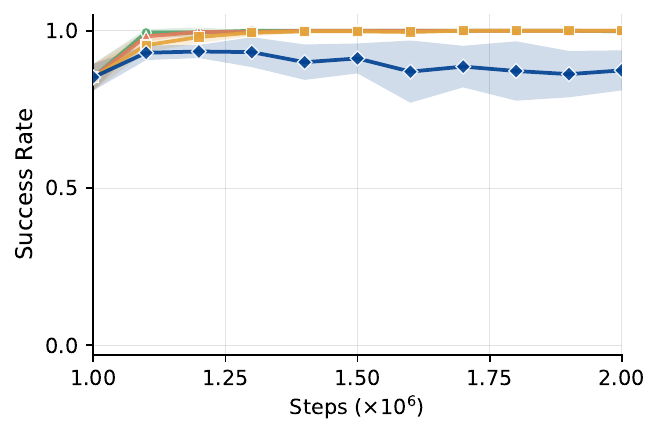}} 
\subfloat[Robomimic-square\label{subFig:kabl_square}]
{\includegraphics[width = 0.32\textwidth]{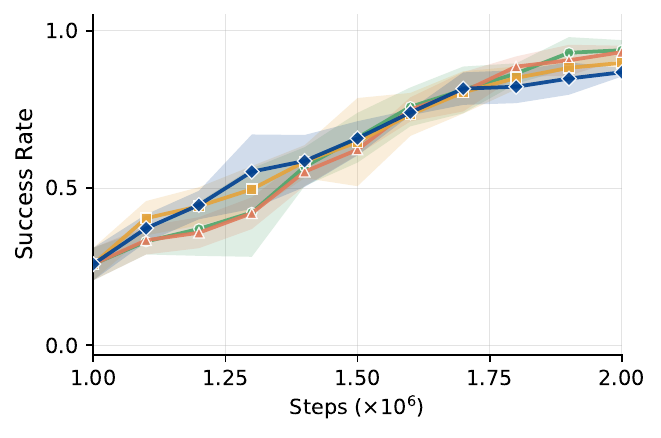}}
\subfloat[Robomimic-can\label{subFig:kabl_can}]
{\includegraphics[width = 0.32\textwidth]{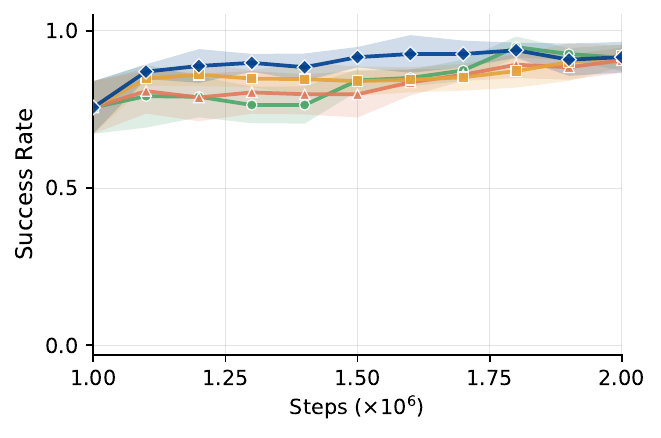}} \\[-0.5em]
\subfloat[Cube-double-task2\label{subFig:kabl_double-2}]
{\includegraphics[width = 0.32\textwidth]{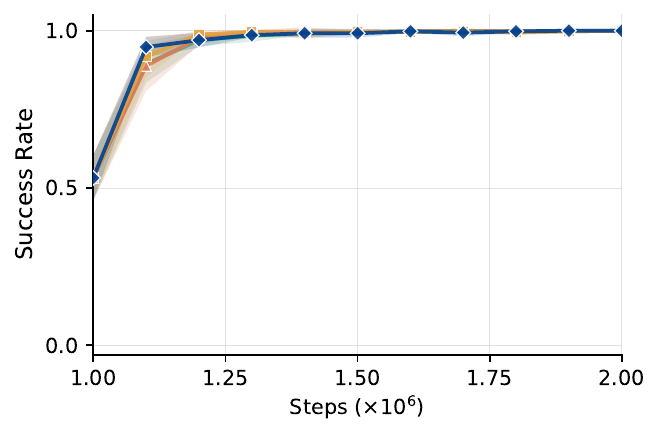}} 
\subfloat[Cube-double-task3\label{subFig:kabl_double-3}]
{\includegraphics[width = 0.32\textwidth]{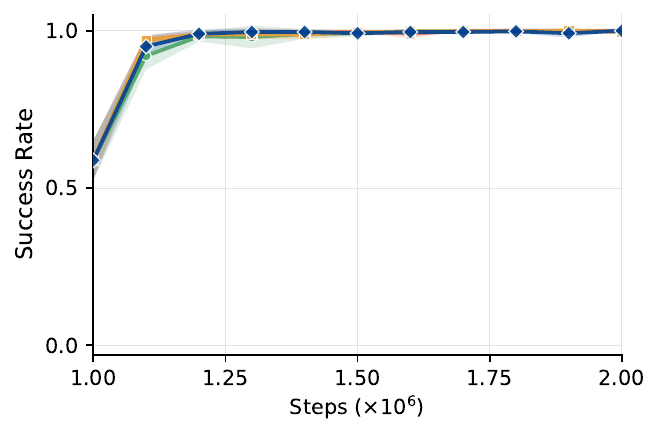}} 
\subfloat[Cube-double-task4\label{subFig:kabl_double-4}]
{\includegraphics[width = 0.32\textwidth]{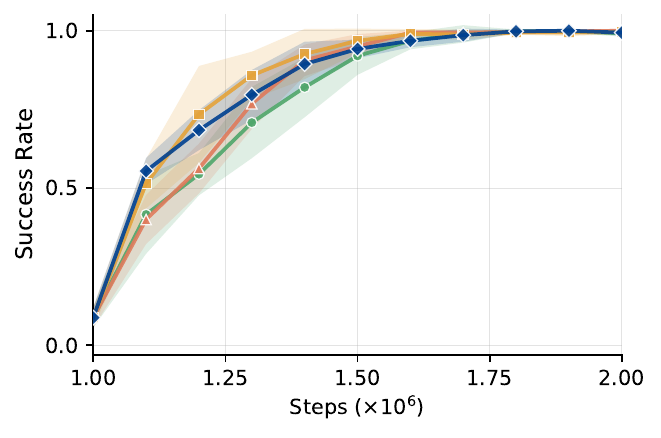}} \\[-0.5em]
\subfloat[Cube-triple-task2\label{subFig:kabl_triple-2}]
{\includegraphics[width = 0.32\textwidth]{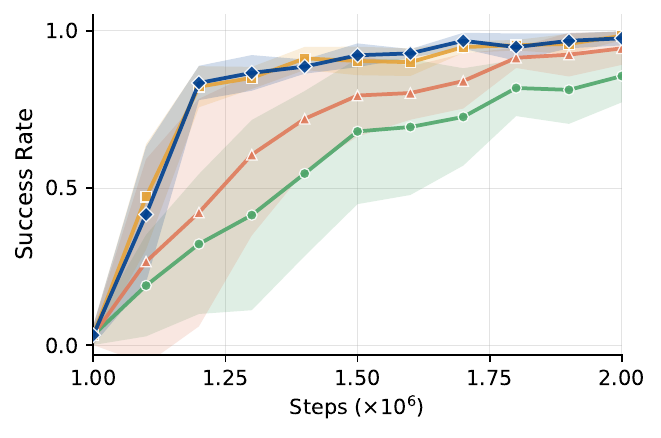}} 
\subfloat[Cube-triple-task3\label{subFig:kabl_triple-3}]
{\includegraphics[width = 0.32\textwidth]{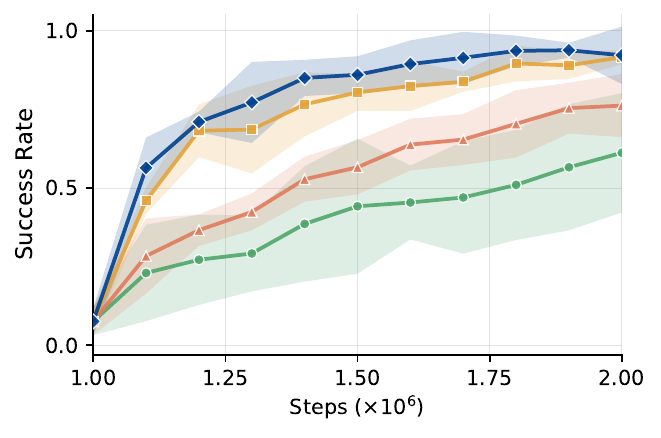}}
\subfloat[Cube-triple-task4\label{subFig:kabl_triple-4}]
{\includegraphics[width = 0.32\textwidth]{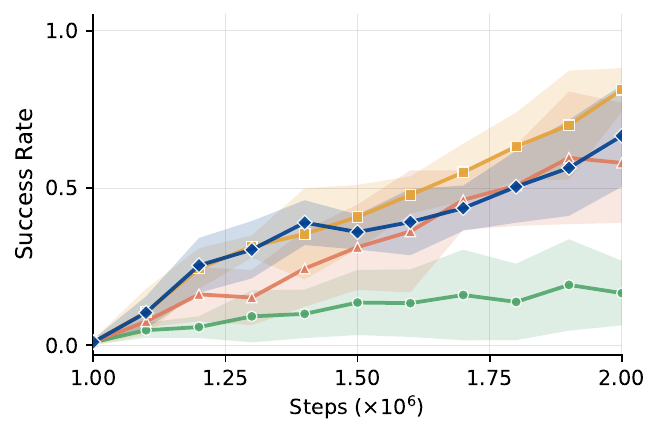}} \\[-0.5em]
\subfloat[Cube-quadruple-task2\label{subFig:kabl_quad-2}]
{\includegraphics[width = 0.32\textwidth]{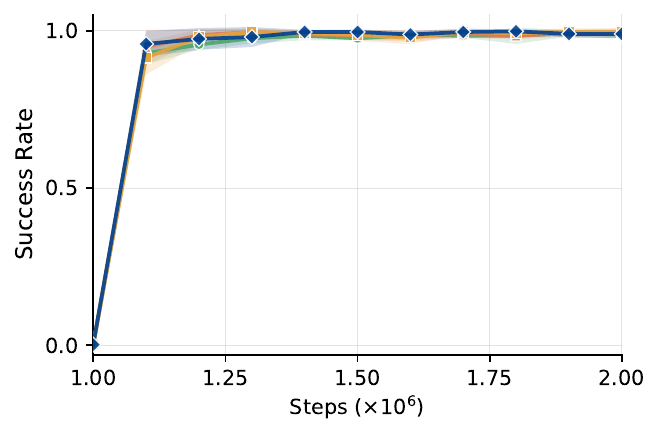}} 
\subfloat[Cube-quadruple-task3\label{subFig:kabl_quad-3}]
{\includegraphics[width = 0.32\textwidth]{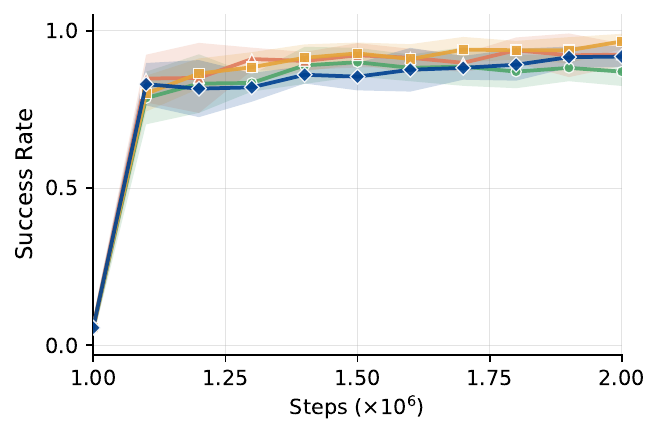}}
\subfloat[Cube-quadruple-task4\label{subFig:kabl_quad-4}]
{\includegraphics[width = 0.32\textwidth]{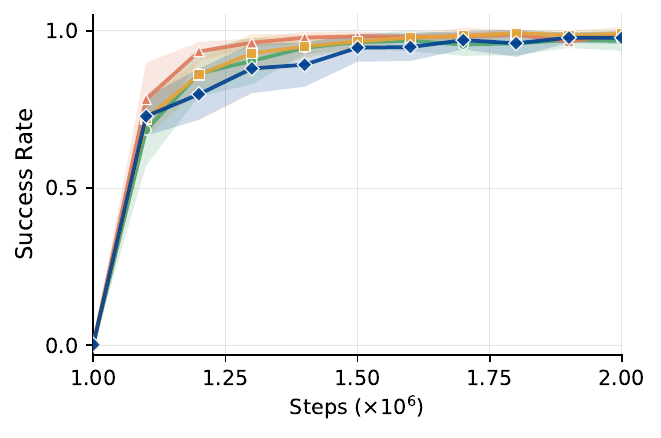}} \\[0.2em]
{\includegraphics[width = 0.5\textwidth, clip]{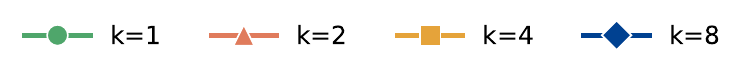}}
\caption{\textbf{Online training curves for the top-$K$ ablation.} Success rate over the online phase on Robomimic (top row) and the OGBench.}
\label{fig:k_ablation}
\end{figure}

\subsection{~\texorpdfstring{$\lambda$}{} Ablations}
\label{app:lambda_ablation}

Fig.~\ref{fig:lambda_sweep} shows the effect of the drift weight $\lambda \in \{0.1, 0.5, 1.0, 5.0\}$ on the cube-quadruple tasks. DFP is robust for $\lambda \geq 0.5$, with all settings yielding near-identical performance.

\begin{figure}[t]
\centering
\captionsetup[subfigure]{justification=centering, aboveskip=1pt, belowskip=1pt}
\subfloat[Cube-quadruple-task2\label{subFig:lkabl_triple-2}]
{\includegraphics[width = 0.32\textwidth]{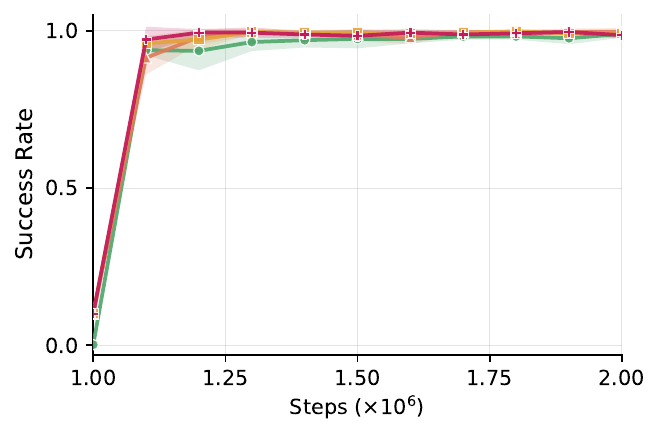}} 
\subfloat[Cube-quadruple-task3\label{subFig:lkabl_triple-3}]
{\includegraphics[width = 0.32\textwidth]{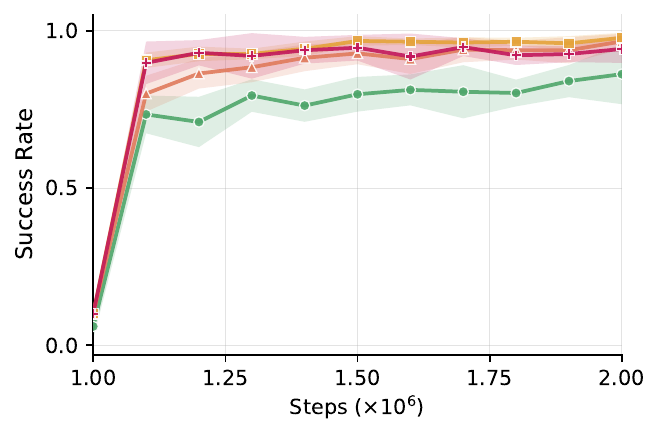}}
\subfloat[Cube-quadruple-task4\label{subFig:lkabl_triple-4}]
{\includegraphics[width = 0.32\textwidth]{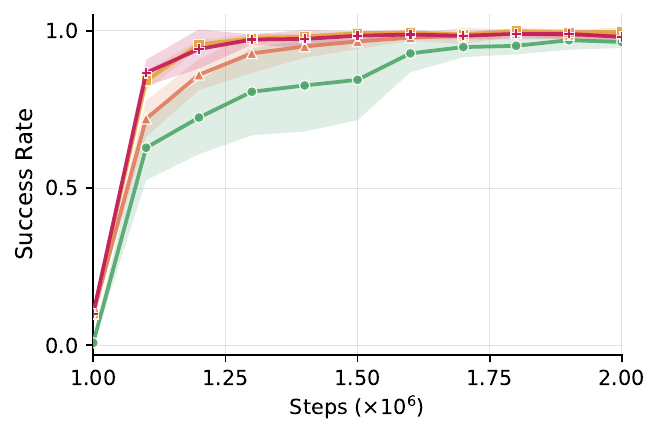}} \\
{\includegraphics[width = 0.6\textwidth, clip]
{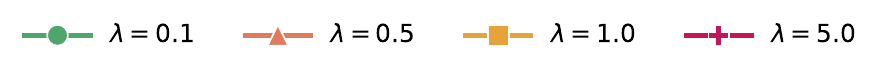}}
\caption{\textbf{Online training curves for different $\lambda$ values.} Success rate over the online phase.}
\label{fig:lambda_sweep}
\end{figure}

\subsection{Training and Inference Cost}
\label{app:timecost_analysis}

We measure per-step wall-clock cost for online training and inference, averaged across the $12$ benchmark tasks (Tab.~\ref{tab:cost_comparison}); inference cost is measured on CPU following the protocol of MVP~\cite{Zhan2026:MVP}. In our implementation, the measured overhead of $\mathcal{L}_{\text{top-}K}$ is small: DFP (13.60 ms/step) is within 0.04 ms of DFP w/o $\mathcal{L}_{\text{top-}K}$, and the full DFP cost is comparable to other one-step baselines and faster than the multi-step BFN and QC-BFN. At inference, DFP requires a single forward pass of $f_\theta$ and stays in the same regime as the other one-step baselines (FQL, QC-FQL, MVP), all roughly an order of magnitude faster than BFN and QC-BFN. The gains reported in Sec.~\ref{sec:results} thus come at negligible training overhead and at the inference cost of a standard one-step policy.

\begin{table}[t!]
\centering
\scriptsize
\caption{Inference and online training cost comparison across baselines. We report the mean wall-clock time (ms) per online training step and per inference step, averaged over 12 tasks.}
\label{tab:cost_comparison}
\setlength{\tabcolsep}{3pt}
\renewcommand{\arraystretch}{1.15}
\resizebox{\textwidth}{!}{%
\begin{tabular}{@{}lcccccccc@{}}
\toprule
& \textbf{BFN} & \textbf{QC-BFN} & \textbf{FQL} & \textbf{QC-FQL} & \textbf{MVP} & \textbf{MVP w/ $\mathcal{L}_{\text{top-}K}$} & \textbf{DFP w/o $\mathcal{L}_{\text{top-}K}$} & \textcolor{our_color}{\textbf{DFP (Ours)}} \\
\midrule
\textbf{Online Cost (ms)} & 15.30 & 16.09 & 12.00 & 12.56 & 13.20 & 13.48 & 13.56 & 13.60 \\
\textbf{Evaluation Cost (ms)} & 291.33 & 295.07 & 21.90 & 21.93 & 25.71 & 25.71 & 29.27 & 29.27 \\

\bottomrule
\end{tabular}
}
\end{table}
\clearpage
\section{Proofs}
\label{app:proofs}

We provide the formal derivations of the equations in Sec.~\ref{sec:wgf_theory} in Appendix~\ref{app:wgf_derivations} and the proof of Proposition~\ref{prop:topk_limit} in Appendix~\ref{app:topk_limit_proof}.

\subsection{Derivations for Sec.~\ref{sec:wgf_theory}}
\label{app:wgf_derivations}

\paragraph{Lagrangian derivation of $\pi^+$ (Eq.~\eqref{eq:pi_plus_objective}).} The closed form follows from the standard Lagrangian derivation of the KL-regularized $Q$-maximization update~\cite{Levine2018:RLasInference, Haarnoja2018:SAC}; we leave the derivation here for completeness. The per-iteration regularized optimal policy is defined as the maximizer of the KL-regularized policy improvement objective,
\begin{equation}
\pi^+(\cdot | s) = \arg\max_{\pi} \;\; \mathbb{E}_{a \sim \pi}[Q_\phi(s, a)] \;-\; \alpha\, \KL\bigl(\pi(\cdot | s)\, \big\|\, \pi_{\mathrm{old}}(\cdot | s)\bigr) \quad \text{s.t. } \int \pi(a | s)\, da = 1.
\end{equation}
Forming the Lagrangian (state $s$ fixed, with multiplier $\lambda$ for normalization),
\begin{equation}
\mathcal{L}(\pi, \lambda) = \int \pi(a)\, Q_\phi(s, a)\, da - \alpha \int \pi(a)\, \log\frac{\pi(a)}{\pi_{\mathrm{old}}(a)}\, da + \lambda\Bigl(1 - \int \pi(a)\, da\Bigr),
\end{equation}
and setting $\delta \mathcal{L}/\delta \pi(a) = 0$,
\begin{equation}
Q_\phi(s, a) - \alpha \Bigl(\log\frac{\pi(a)}{\pi_{\mathrm{old}}(a)} + 1\Bigr) - \lambda = 0.
\end{equation}
Solving yields $\pi(a) = \pi_{\mathrm{old}}(a)\, \exp\bigl((Q_\phi(s, a) - \alpha - \lambda)/\alpha\bigr)$, and absorbing the $a$-independent factor into the partition function,
\begin{equation}
\pi^+(a | s) = \frac{\pi_{\mathrm{old}}(a | s)\, \exp(Q_\phi(s, a)/\alpha)}{Z(s)}, \qquad Z(s) = \int \pi_{\mathrm{old}}(a' | s)\, \exp(Q_\phi(s, a')/\alpha)\, da'. \qed
\end{equation}

\begin{lemma}[$W_2$ gradient flow velocity of $\KL(q \| p)$]
\label{lem:wgf_kl_velocity}
For a fixed reference $p \in \mathcal{P}_2(\mathbb{R}^d)$ with absolutely continuous, strictly positive density, the $W_2$ gradient flow particle velocity of the KL functional $\mathcal{F}(q) = \KL(q \| p)$ is the score difference $v_t(x) = \nabla_x \log p(x) - \nabla_x \log q_t(x)$ (Eq.~\eqref{eq:wgf_velocity_general}).
\end{lemma}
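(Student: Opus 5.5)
The plan is the standard Otto-calculus computation: compute the first variation of $\mathcal{F}(q)=\KL(q\|p)$ and substitute it into the steepest-descent velocity $v_t=-\nabla_x \frac{\delta \mathcal{F}}{\delta q_t}$ recalled in Sec.~\ref{sec:prelim_wgf}. As a sanity check, I will also confirm that the resulting continuity equation coincides with Eq.~\eqref{eq:wgf_pde} and dissipates $\mathcal{F}$.

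\textbf{Step 1: first variation.} Write $\mathcal{F}(q)=\int q\log q\,dx-\int q\log p\,dx$. For an admissible perturbation $q_\varepsilon=q+\varepsilon\chi$ with $\int\chi\,dx=0$ (so that $q_\varepsilon$ stays a probability density), differentiate at $\varepsilon=0$. This gives
\begin{equation*}
\frac{d}{d\varepsilon}\Big|_{\varepsilon=0}\mathcal{F}(q_\varepsilon)=\int \chi\,\bigl(\log q+1-\log p\bigr)\,dx .
\end{equation*}
Hence $\frac{\delta\mathcal{F}}{\delta q}=\log q-\log p+\text{const}$, where the constant is immaterial because $\int\chi=0$. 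The exchange of derivative and integral needs justification, for instance by dominated convergence when $q$ and $p$ are smooth and positive with finite second moments. The strict positivity of $p$ assumed in the lemma is exactly what keeps $\log p$ finite.

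\textbf{Step 2: velocity.} Taking the spatial gradient kills the constant:
\begin{equation*}
v_t=-\nabla_x\frac{\delta\mathcal{F}}{\delta q_t}=\nabla_x\log p-\nabla_x\log q_t ,
\end{equation*}
which is Eq.~\eqref{eq:wgf_velocity_general}.

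\textbf{Step 3: consistency with the PDE and dissipation.} Substituting into the continuity equation gives
\begin{equation*}
\partial_t q_t=\nabla\cdot\bigl(q_t\nabla\log(q_t/p)\bigr)=\Delta q_t-\nabla\cdot(q_t\nabla\log p),
\end{equation*}
the Fokker--Planck equation, in agreement with Eq.~\eqref{eq:wgf_pde}. Differentiating $\mathcal{F}(q_t)$ and integrating by parts yields
\begin{equation*}
\frac{d}{dt}\KL(q_t\|p)=-\int q_t\,\bigl\|\nabla\log(q_t/p)\bigr\|^2\,dx\le 0,
\end{equation*}
the negative relative Fisher information. This confirms that the velocity field is indeed a steepest-descent direction.

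The main obstacle is rigor rather than algebra. One must show that the boundary terms in the integrations by parts vanish, which requires enough decay of $q_t\nabla\log(q_t/p)$ at infinity, and that the formal first variation agrees with the Wasserstein subdifferential. For the latter I would appeal to the general theory of Ambrosio--Gigli--Savar\'e~\cite{Ambrosio2005:GF} and Jordan--Kinderlehrer--Otto~\cite{Jordan1998:Variational}. The cleanest route is to state the lemma at the formal level under smoothness assumptions and cite that theory for the rigorous identification of the minimal-norm subgradient with $\nabla\log(q/p)$.
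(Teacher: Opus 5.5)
Your proposal is correct and follows the paper's proof essentially verbatim: you compute the first variation $\log(q/p)+1$ of $\KL(q\|p)$ and set $v_t=-\nabla_x\frac{\delta\mathcal{F}}{\delta q_t}$ to obtain the score difference. Your Step 3 (the Fokker--Planck form and the relative-Fisher-information dissipation identity) and the rigor caveats are extras the paper omits. Note, however, that dissipation alone only shows this is a descent direction; the ``steepest'' part comes from Cauchy--Schwarz in $L^2(q_t)$, or simply from the definition the paper adopts.
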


\begin{proof}
Writing $\mathcal{F}(q) = \int q(x) \log\bigl(q(x)/p(x)\bigr)\,dx$ and computing the first variation,
\begin{equation}
\frac{\delta \mathcal{F}}{\delta q}(x) = \log\frac{q(x)}{p(x)} + 1,
\end{equation}
so that
\begin{equation}
\nabla_x \frac{\delta \mathcal{F}}{\delta q}(x) = \nabla_x \log q(x) - \nabla_x \log p(x).
\end{equation}
The $W_2$ gradient flow particle velocity is the steepest-descent direction $v_t(x) = -\nabla_x \frac{\delta \mathcal{F}}{\delta q_t}(x)$~\cite{Jordan1998:Variational, Ambrosio2005:GF}, which yields the score-difference form.
\end{proof}

\paragraph{Drifting field as KDE-WGF velocity on policy space (Eq.~\eqref{eq:drift_pol_wgf}).} The identification of the drifting field $\mathbf{V}_{p, q}$ with the KDE-approximated $W_2$ gradient flow velocity of $\KL(q \| p)$ is established by \cite{Cao2026:GFD}; we reproduce the specialization to policy space here for completeness. By Lemma~\ref{lem:wgf_kl_velocity} applied to $\mathcal{F}(\pi_t) = \KL(\pi_t \| \pi^+)$ with $p = \pi^+, q_t = \pi_t$, the $W_2$ gradient flow particle velocity on policy space is
\begin{equation}
v_t(a | s) = \nabla_a \log \pi^+(a | s) - \nabla_a \log \pi_t(a | s).
\label{eq:app_pol_velocity}
\end{equation}
Applying the KDE gradient identity Eq.~\eqref{eq:kde_grad} to both score functions with kernel bandwidth $h$,
\begin{equation}
h^2\, \nabla_a \log \pi^+_{\mathrm{kde}}(a | s) = \mathbf{V}^+_{\pi^+(\cdot | s)}(a), \qquad h^2\, \nabla_a \log \pi_{t, \mathrm{kde}}(a | s) = \mathbf{V}^-_{\pi_t(\cdot | s)}(a),
\end{equation}
where $\mathbf{V}^+, \mathbf{V}^-$ are the kernel mean-shift forms in Sec.~\ref{sec:prelim_drifting}. Multiplying Eq.~\eqref{eq:app_pol_velocity} by $h^2$ and substituting the KDE-approximated scores,
\begin{equation}
h^2\, v_t^{\mathrm{kde}}(a | s) = \mathbf{V}^+_{\pi^+(\cdot | s)}(a) - \mathbf{V}^-_{\pi_t(\cdot | s)}(a) = \mathbf{V}_{\pi^+, \pi_t}(a | s).
\end{equation}
Specializing $\pi_t = \pi_\theta$ gives Eq.~\eqref{eq:drift_pol_wgf}. \qed

\paragraph{Decomposition of the drifting field (Eq.~\eqref{eq:wpo_decomp}).} Taking the $a$-gradient of $\log \pi^+(a | s)$ from the closed form $\pi^+(a | s) = \pi_{\mathrm{old}}(a | s) \exp(Q_\phi(s, a)/\alpha) / Z(s)$,
\begin{equation}
\nabla_a \log \pi^+(a | s) = \nabla_a \log \pi_{\mathrm{old}}(a | s) + \frac{1}{\alpha}\, \nabla_a Q_\phi(s, a),
\end{equation}
since $\nabla_a \log Z(s) = 0$. Eq.~\eqref{eq:drift_pol_wgf} expresses $\mathbf{V}_{\pi^+, \pi_\theta}$ in terms of KDE-smoothed log densities; under the small-bandwidth limit $\log p_{\mathrm{kde}} \to \log p$, substituting the Boltzmann gradient above yields
\begin{align*}
\mathbf{V}_{\pi^+, \pi_\theta}(a | s)
&\simeq h^2\bigl[\nabla_a \log \pi^+(a | s) - \nabla_a \log \pi_\theta(a | s)\bigr] \\
&= h^2\bigl[\nabla_a \log \pi_{\mathrm{old}}(a | s) + \tfrac{1}{\alpha}\nabla_a Q_\phi(s, a) - \nabla_a \log \pi_\theta(a | s)\bigr] \\
&= \frac{h^2}{\alpha}\, \nabla_a Q_\phi(s, a) + h^2\, \bigl( \nabla_a \log \pi_{\mathrm{old}}(a | s) - \nabla_a \log\pi_\theta(a | s) \bigr),
\end{align*}
which is Eq.~\eqref{eq:wpo_decomp}. \qed

\subsection{Proof of Proposition~\ref{prop:topk_limit}}
\label{app:topk_limit_proof}

We prove the two claims of Proposition~\ref{prop:topk_limit}: (i) convergence of $\mathcal{L}_{\text{top-}K}$ to the level-set drift loss, and (ii) the total-variation bias bound to $\mathcal{L}_{\mathrm{PI}}$.

\paragraph{Setup.} Let $a^{(1)}, \ldots, a^{(N)} \overset{\text{i.i.d.}}{\sim} \pi_{\mathrm{old}}(\cdot | s)$ and let $P_K(s) := \mathrm{TopK}_{j} Q_\phi(s, a^{(j)})$ be the empirical top-$K$ set with $K/N = \rho$. Denote by $F_s$ the cumulative distribution function of $Q_\phi(s, A)$ for $A \sim \pi_{\mathrm{old}}(\cdot | s)$, and by $q^\rho(s) := F_s^{-1}(1 - \rho)$ the population $(1-\rho)$-quantile. Under the density assumption (strictly positive density of $F_s$ at $q^\rho(s)$), the empirical $(1-\rho)$-quantile $\hat{q}^\rho_N(s)$ satisfies $\hat{q}^\rho_N(s) \to q^\rho(s)$ a.s. by the Bahadur representation (or directly by Glivenko-Cantelli applied to $F_s$).

\paragraph{(i) Convergence of $\mathcal{L}_{\text{top-}K}$ to the level-set drift loss.}
The empirical distribution $P_K(s)$ is the conditional empirical measure of $\{a^{(j)}\}$ given $Q_\phi(s, a^{(j)}) \geq \hat{q}^\rho_N(s)$. By a standard truncation argument, $P_K \xrightarrow{d} \tilde{\pi}^\rho$ in $W_2$ as $N \to \infty$. The kernel mean shift $\mathbf{V}^+_p(x)$, defined by Eq.~\eqref{eq:kde_grad}, is continuous in $p$ in total variation. Writing $\mathbf{V}^+_p(x) = N(p)/D(p)$ with $N(p) := \int k(x, y)(y - x)\,dp(y)$ and $D(p) := \int k(x, y)\,dp(y)$, and using $\bigl|\int f\,(dp - dq)\bigr| \leq \|f\|_\infty\, \mathrm{TV}(p, q)$ on both with $\|k(x, \cdot)(\cdot - x)\|_\infty \leq K_{\max}\, \mathrm{diam}(\mathcal{A})$ and $\|k(x, \cdot)\|_\infty \leq K_{\max}$, the standard quotient identity yields
\begin{equation}
\bigl|\mathbf{V}^+_p(x) - \mathbf{V}^+_q(x)\bigr| \;\leq\; L_V\, \mathrm{TV}(p, q), \quad L_V \;=\; \mathcal{O}\!\left(\frac{K_{\max}^2\, \mathrm{diam}(\mathcal{A})}{k_{\min}^2}\right),
\label{eq:V_lipschitz}
\end{equation}
where $k_{\min}$ is a positive lower bound on $D(\cdot)$ (positive when $p, q$ have local full support). Combining with $P_K \to \tilde{\pi}^\rho$ in TV gives $\mathbf{V}^+_{P_K} \to \mathbf{V}^+_{\tilde{\pi}^\rho}$ pointwise. The negative side $\mathbf{V}^-_{\pi_\theta}$ is unchanged (population), so the drift field $\mathbf{V}_{P_K, \pi_\theta} \to \mathbf{V}_{\tilde{\pi}^\rho, \pi_\theta}$. The squared-norm form of $\mathcal{L}_{\mathrm{drift}}$ (Eq.~\eqref{eq:Ldrift_general}) and dominated convergence yield $\mathcal{L}_{\text{top-}K}(\theta) \to \mathcal{L}_{\mathrm{drift}}(\theta;\, \tilde{\pi}^\rho, \pi_\theta)$.

\paragraph{(ii) Bias bound to $\mathcal{L}_{\mathrm{PI}}$.}
Both $\mathcal{L}_{\mathrm{drift}}(\theta;\, \tilde{\pi}^\rho, \pi_\theta)$ and $\mathcal{L}_{\mathrm{PI}}(\theta) = \mathcal{L}_{\mathrm{drift}}(\theta;\, \pi^+, \pi_\theta)$ share the same negative side $\mathbf{V}^-_{\pi_\theta}$, so by Eq.~\eqref{eq:V_lipschitz},
\begin{equation}
\bigl\|\mathbf{V}^+_{\tilde{\pi}^\rho}(\hat{a} | s) - \mathbf{V}^+_{\pi^+}(\hat{a} | s)\bigr\| \;\leq\; L_V\, \mathrm{TV}\bigl(\tilde{\pi}^\rho(\cdot | s), \pi^+(\cdot | s)\bigr).
\end{equation}
Expanding the squared norm in Eq.~\eqref{eq:Ldrift_general} $\mathcal{L}_{\mathrm{drift}}(\theta; p, q) = \mathbb{E}\bigl[\|\mathbf{V}^+_p - \mathbf{V}^-_q\|^2\bigr]$. The difference factorizes as
\begin{equation}
\mathcal{L}_{\mathrm{drift}}(\theta;\, \tilde{\pi}^\rho, \pi_\theta) - \mathcal{L}_{\mathrm{PI}}(\theta) = \mathbb{E}\bigl[(\mathbf{V}^+_{\tilde{\pi}^\rho} - \mathbf{V}^+_{\pi^+}) \cdot (\mathbf{V}^+_{\tilde{\pi}^\rho} + \mathbf{V}^+_{\pi^+} - 2\mathbf{V}^-_{\pi_\theta})\bigr].
\end{equation}
By Cauchy-Schwarz and the bounded norm of kernel mean shift ($\|\mathbf{V}^+_p\|, \|\mathbf{V}^-_q\| \leq M$ with $M = \mathrm{diam}(\mathcal{A})$, since the mean shift is a weighted average of $(y - x)$ with $y, x \in \mathcal{A}$),
\begin{equation}
\bigl|\mathcal{L}_{\mathrm{drift}}(\theta;\, \tilde{\pi}^\rho, \pi_\theta) - \mathcal{L}_{\mathrm{PI}}(\theta)\bigr| \;\leq\; 4 M\, \mathbb{E}\bigl[\|\mathbf{V}^+_{\tilde{\pi}^\rho} - \mathbf{V}^+_{\pi^+}\|\bigr] \;\leq\; 4 M L_V\, \mathrm{TV}(\tilde{\pi}^\rho, \pi^+),
\end{equation}
which establishes the bound with constant $C := 4 M L_V = \mathcal{O}\!\left(K_{\max}^2\, \mathrm{diam}(\mathcal{A})^2 / k_{\min}^2\right)$ depending only on the kernel and action-space diameter.

\paragraph{Tightness of the bound.}
The TV gap $\overline{\mathrm{TV}}(\tilde{\pi}^\rho, \pi^+)$ is a function of both $\rho$ and $\alpha$, reflecting the structural mismatch between hard $\rho$-quantile truncation and soft Boltzmann tilting $\exp(Q_\phi/\alpha)$. In the joint sharp limit $\rho, \alpha \to 0$, both distributions collapse to $\delta_{a^\star(s)}$ at the argmax and the gap vanishes. More relevantly for our finite-parameter setting, for each soft temperature $\alpha$, there exists a matched truncation level $\rho^*(\alpha) := \arg\min_\rho \overline{\mathrm{TV}}(\tilde{\pi}^\rho, \pi^+(\alpha))$ at which the gap is minimized; equivalently, our choice of $\rho = K/N$ implicitly selects a matched $\alpha^*(\rho)$ at which the bound is tight, with the residual gap small under mild regularity of the $Q$-distribution under $\pi_{\mathrm{old}}$. \qed

\paragraph{$C$ in RL settings.}
The constant $C = \mathcal{O}\!\left(K_{\max}^2\, \mathrm{diam}(\mathcal{A})^2 / k_{\min}^2\right)$ depends on the kernel ($K_{\max}, k_{\min}$) and quadratically on the action-space diameter. Standard continuous-control RL benchmarks clip the action space to $\mathcal{A} = [-1, 1]^d$ with low dimension $d$ (e.g., Robomimic~\cite{robomimic2021} and OGBench~\cite{park2024:OGbench}), so $\mathrm{diam}(\mathcal{A}) = 2\sqrt{d}$ remains small and $C$ is moderate in practice, in contrast to the high-dimensional image-generation setting where drifting models were originally introduced~\cite{Deng2026:Drifting}.

\section{Computation Costs}

In this section, we report the computational resources used in our experiments. All experiments were conducted on Nvidia RTX 3090 GPUs with Intel Xeon Gold 6342 CPUs (96 cores). Table~\ref{tab:compute_resources} reports the GPU-hours used for each experiment.

\begin{table}[h]
\centering
\small
\caption{Computational resources for each experiment in this paper. We report the total GPU-hours aggregated across all tasks and seeds, measured on Nvidia RTX 3090 GPUs.}
\label{tab:compute_resources}
\begin{tabular}{lc}
\toprule
\textbf{Experiment} & \textbf{GPU-hours} \\
\midrule
Main offline-to-online (Table~\ref{tab:main_results})            &  3,500h  \\
Backbone $\times$ $\mathcal{L}_{\text{top-}K}$ ablation (Table~\ref{tab:topk_loss_ablation})  &  900h  \\
Top-$K$ size ablation (Table~\ref{tab:topk_ablation})             &  1,500h  \\
Drift weight $\lambda$ ablation (Table~\ref{tab:beta_ablation})   &  500h  \\
Offline RL evaluation (Sec.~\ref{app:offline_rl})                 &  100h  \\
\midrule
\textbf{Total}                                                    &  6,500h  \\
\bottomrule
\end{tabular}
\end{table}

\section{Societal Impact}
\label{app:societal_impact}
Our work introduces a non-ODE generative policy paradigm for offline-to-online reinforcement learning, with potential downstream applications in robotics, automated manufacturing, and other settings requiring learning from limited demonstrations followed by online refinement. As with reinforcement learning methods broadly, deployed policies inherit biases in the reward specification and demonstration data and may behave unpredictably under distribution shift; physical deployment requires additional safety mechanisms beyond what our simulation experiments evaluate. We do not foresee specific misuse pathways unique to this work beyond those associated with reinforcement learning for continuous control.
\clearpage

\end{document}